\theoremstyle{plain}
\newtheorem{theorem}{Theorem} 
\theoremstyle{definition}
\newtheorem{remark}{Remark}
\definecolor{mygray}{gray}{.9}
\newcommand\figcaption{\def\@captype{figure}\caption}
\newcommand\tabcaption{\def\@captype{table}\caption}
\def\hlinew#1{%
\noalign{\ifnum0=`}\fi\hrule \@height #1 \futurelet
\reserved@a\@xhline}
\title{Improving Visual Prompt Tuning by Gaussian Neighborhood Minimization for Long-Tailed \\ Visual Recognition }
\author{%
  Mengke Li\\
  Guangming Laboratory\\
  Shenzhen, China\\
  \texttt{limengke@gml.ac.cn} 
  \And
  Ye Liu\\
  Guangming Laboratory\\
  Shenzhen, China\\
  \texttt{zbdly226@gmail.com} 
  \And
  Yang Lu \\
  Xiamen University \\
  Xiamen, China\\
  \texttt{luyang@xmu.edu.cn} \\
  \AND
  Yiqun Zhang \\ 
  \footnotesize{Guangdong University of Technology}\\
  Guangzhou, China \\
  \texttt{yqzhang@gdut.edu.cn} 
  \And
  Yiu-ming Cheung \\
  Hong Kong Baptist University\\
  Hong Kong SAR, China \\
  \texttt{ymc@comp.hkbu.ed} 
  \And
  Hui Huang\thanks{Corresponding author.} \\
  Shenzhen University\\
  Shenzhen, China\\
  \texttt{hhzhiyan@gmail.com} 
}
\begin{document}

\maketitle

\begin{abstract}
Long-tailed visual recognition has received increasing attention recently.
Despite fine-tuning techniques represented by visual prompt tuning (VPT) achieving substantial performance improvement by leveraging pre-trained knowledge, models still exhibit unsatisfactory generalization performance on tail classes.
To address this issue, we propose a novel optimization strategy called Gaussian neighborhood minimization prompt tuning (GNM-PT), for VPT to address the long-tail learning problem.
We introduce a novel Gaussian neighborhood loss, which provides a tight upper bound on the loss function of data distribution, facilitating a flattened loss landscape correlated to improved model generalization. 
Specifically, GNM-PT seeks the gradient descent direction within a random parameter neighborhood, independent of input samples, during each gradient update.
Ultimately, GNM-PT enhances generalization across all classes while simultaneously reducing computational overhead.
The proposed GNM-PT achieves state-of-the-art classification accuracies of 90.3\%, 76.5\%, and 50.1\% on benchmark datasets CIFAR100-LT (IR 100), iNaturalist 2018, and Places-LT, respectively. 
The source code is available at \href{https://github.com/Keke921/GNM-PT}{https://github.com/Keke921/GNM-PT}.
%
\end{abstract}

\section{Introduction}
\label{sec:intro}
Long-tailed visual recognition provides solutions to the challenges posed by the prevalent imbalance and multitude of classes in real-world data.
Its training data mirror the real-world distribution, wherein a few categories (head classes) boast abundant samples, while a substantial number of categories (tail classes) exhibit very few samples, conforming to a long-tail distribution~\cite{Wang2017Learning}.
Given its ubiquity and practicality, long-tailed visual recognition has attracted considerable attention, and numerous approaches have been proposed in recent years.
Based on the data processing workflow, these methods can be broadly categorized into three types~\cite{li2022advances}: $\left.1\right)$ data manipulation~\cite{he2008adasyn, Mengye2018Learning, cui2019class, wang2019dynamic,Liu2022Memory}, $\left.2\right)$ representation improvement~\cite{bbn20, decouple20, Zhu2022CVPR, Jin2023shike}, and $\left.3\right)$ model output modification~\cite{Kaidi2019, RenJW2020Balanced, adjustment21, LiMK2022GCL, LiMK2022KPS}.
These methods address the challenge of long-tailed learning from diverse perspectives by extending the traditional training-from-scratch approach.

Recently, leveraging the robust discriminative capabilities of pre-trained models through the integration of multi-head self-attention (MHSA) based networks~\cite{vaswani2017attention, Dosovitskiy21vit} and parameter-efficient fine-tuning (PEFT) techniques~\cite{jia2022visual, zhou2022learning, YooKJLY23Improving, han20232vpt, han2024facing} has led to substantial enhancements in model performance on long-tailed data.
For example, \citet{tian2022vl} introduced the text modality by CLIP~\cite{radford2021learning} to aid in visual representation.
\citet{Dong23LPT} exploited visual prompt tuning (VPT) to learn class-shared and group-specific prompts for long-tailed data.
These methods essentially increase model compatibility.
However, even with the assistance of large-scale pre-trained knowledge, PEFT represented by VPT still exhibits inferior generalization performance on tail classes compared to head classes.

\begin{figure}[!t]
\subfigure[Side view]{ 
    \includegraphics[width=0.48\textwidth]{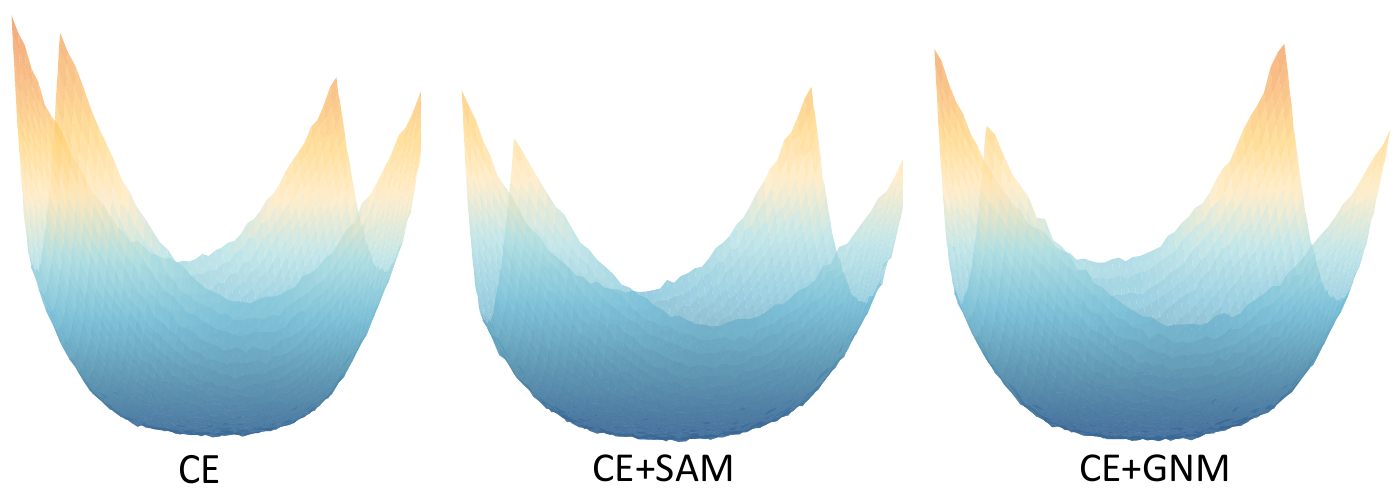} 
    \label{fig:intro_ce_a}
    }
\hspace{6pt}
\subfigure[Top view]{
    \includegraphics[width=0.48\textwidth]{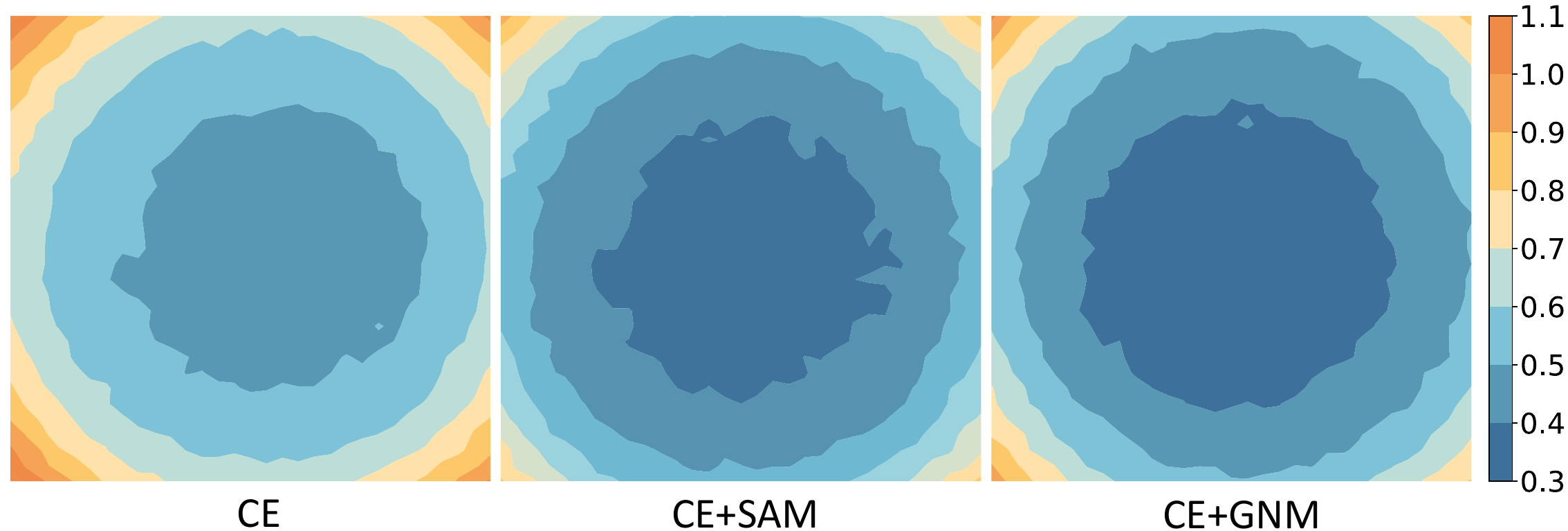}
    \label{fig:intro_ce_b}
    }    
\caption{Loss landscape comparison of VPT based on ViT-B/16 with CE loss (best view in color). The dataset used is CIFAR100-LT with an imbalance ratio of 100.}
\vspace{-9pt}
\label{fig:intro_ce}
\end{figure}

\citet{ChenHG22When} emphasize that converged ViTs exhibit extremely sharp local minima, hindering their generalization~\cite {DziugaiteR17Computing}, particularly for tail classes with limited samples.
The imperative necessity to improve tail-class accuracy resides in advancing the generalization capability of PEFT, a facet extensively elucidated within the optimization framework~\cite{hochreiter1994simplifying}.
Searching for flat minima by sharpness-aware minimization (SAM)~\cite{Foret21Sharpness} represents a promising optimization technique to improve model performance (as shown in ~\Cref{fig:intro_ce}).
SAM first captures the sharpness of loss landscape, which correlates with the generalization gap, based on gradient directions, and then searches for flat minima. 
Nevertheless, SAM encounters two challenges when applied to long-tail data: 1) flat minima primarily target head classes~\cite{zhou2023class, zhou2023imbsam}, and 2) it involves two sequential gradients computation.


This paper proposes a novel optimization strategy, named Gaussian neighborhood minimization prompt tuning (GNM-PT), inspired by SAM of flattening the loss landscape to enhance model generalization.  
Since the widespread usage, the flexibility of prompt and the suitability amount of trainable parameters for visualizing the loss landscape, we select VPT as a representative of PEFT technology to study. 
GNM-PT shows superior performance than SAM-based methods, particularly targeting long-tailed visual recognition tasks.

Specifically, we propose to minimize a novel Gaussian neighborhood loss named Gaussian neighborhood minimization (GNM) to obtain flat minima, substantiated by rigorous theoretical proof.
GNM minimizes the mean value of the loss function within the parameter neighborhood, in contrast to the approaches of minimizing the maximum value of the parameter neighborhood employed by SAM~\cite{Foret21Sharpness,KwonKPC21ASAM,Liu2022R-SAM,mi2022make}.
The mean value is a tighter upper bound than the maximum. 
It is evident from \Cref{fig:intro_ce_b} that GNM yields a distinctly pronounced convexity, characterized by relatively lower loss values, thereby leading to a more optimal solution. 
The calculation is achieved by random sampling from a normal distribution as the perturbation for the training parameters.
The proposed GNM equally constrains smoothness optimization through a sample-independent perturbation without extra gradient calculations, which eventually improves model performance for long-tailed data.
As shown in~\Cref{fig:intro_ce_a}, GNM, resembling SAM, flattens the landscape of cross-entropy (CE) loss.
To further enhance the classification capability of the PEFT methods exemplified by VPT, we harness information from high-level prompts by merging the prompt with the class token for the ultimate classification.
We theoretically validate the rationale behind the proposed method.
Extensive experiments on benchmark datasets demonstrate that GNM-PT shows great generalization ability on long-tail data, surpassing existing methods.
Ablation experiments further prove that GNM improves model performance as well as maintains computational efficiency. 
Our main contributions are summarized as follows:
1) We identify pressing concerns, explicitly focusing on the imperative need for pre-trained models to enhance the generalization abilities across all classes while concurrently mitigating computational time.
2) We propose the efficient GNM-PT, tailored for long-tailed visual recognition based on VPT, which can improve model generalization while minimizing computational overhead. 
3) We provide theoretical evidence supporting the superiority of the proposed GNM-PT. Comprehensive experiments also demonstrate that GNM-PT outperforms its state-of-the-art counterparts.

\section{Related work}
\label{sec:related_work}

\subsection{DNN-Based Model for Long-Tailed Learning}
Deep neural networks (DNNs) have made significant advancements in long-tail visual recognition in the last few decades. 
Re-balancing the data distribution, including re-sampling the input data~\cite{Nitesh2002SMOTE, Mahajan2018ECCV, decouple20, wang2020devil, Zang2021fasa} and re-weighting the loss function~\cite{elkan2001foundations, Mengye2018Learning, cui2019class, Salman2018Cost, park2021influence} is the most direct and effective way to improve the performance of the tail classes. 
Re-margining methods~\cite{Kaidi2019, LiMK2022KPS, adjustment21, LiMK2022GCL, Wang2021Seesaw} leave larger margins for tail classes than for head classes to improve the separability of tail classes, which can alleviate overfitting and improve model generalization.
However, these methods, while improving the performance of tail classes, come at the cost of sacrificing the accuracy in head classes.  
Ensembling learning encompasses redundant ensembling~\cite{WangXD21RIDE, Cai2021ACE, LiBL2022Trustworthy,liJ2022nested, Jin2023shike}, which aggregates separate classifiers or networks in a multi-expert framework, and complementary ensembling~\cite{bbn20, Cui2022reslt, XiangLY2020LFME}, which involves statistically selecting different data divisions. 
Studies have demonstrated that ensembling methods, particularly redundant ensembling, can achieve SOTA performance and generate more robust predictions by reducing model variance~\cite{LiBL2022Trustworthy, WangXD21RIDE} and/or increasing data diversity~\cite{liJ2022nested, LiY2020Overcoming, XiangLY2020LFME}. 
Additionally, various alternative methodologies within the realm of DNN-based long-tailed learning.
For example, data augmentation~\cite{ParkS2022Majority, zhang2021bag}, decoupling representation~\cite{decouple20, mislas21}, logit adjustment~\cite{Kaidi2019, adjustment21, LiMK2022GCL, LiMK2022FBL}, to name a few.

\subsection{MHSA-Based Fine-Tuning for Long-Tail Learning}
Recent advancements in the field of computer vision have harnessed the potential of pre-trained MHSA-Based models, as exemplified by CLIP~\cite{radford2021learning} and the Visual Transformers (ViTs)~\cite{jia2022visual}.
In contrast to the conventional practice of training DNNs from scratch, recent proposed PEFT techniques~\cite{jia2022visual, chen2022adaptformer,hu2021lora}, adopted in RAC~\cite{Long2022RAC}, VL-LTR~\cite{tian2022vl}, LPT~\cite{Dong23LPT}, and PEL~\cite{shi2023parameter}, to name a few, showcase that the meticulous fine-tuning of pre-trained models can yield surprising improvements in the performance of long-tailed visual recognition tasks. 
For example, VL-LTR employs a contrastive language-image pre-training approach, known as CLIP, and integrates supplementary image-text web data for fine-tuning.
LPT fine-tunes a vision Transformer using visual prompt tuning~\cite{jia2022visual}, employing a two-stage training strategy. 
Nevertheless, it is worth noting that their performance in tail classes still exhibits inferior results compared to that in head classes.

\subsection{Sharpness-Aware Minimization}
\citet{hochreiter1994simplifying} first pointed out that flat minima corresponds to low network complexity and high generalization performance. 
\citet{Li18Visualizing} proposed to visualize the loss landscape and used it to find the flat minima.
Subsequently, \citet{Foret21Sharpness} proposed SAM to seek flat minima and minimize the loss function, so as to improve model generalization capability.
\citet{ChenHG22When} demonstrated that ViTs converge at extremely sharp local minima, and they can surpass ResNets in both accuracy and robustness when combined with SAM optimizer. 
As for long-tailed data, models often showcase varying levels of generalization performance across different classes, with the tail class typically exhibiting inferior performance.
Based on this, CCSAM~\cite{zhou2023class} scales the intensity of SAM for classifier inversely with the number of samples available for each class.
\citet{zhou2023imbsam} observed that, despite the utilization of SAM, the tail classes, due to their substantially smaller sample sizes in comparison to head classes, have limited influence on model parameters. 
As a result, the loss landscape for tail classes lacks the desired flatness.  
To address this issue, they proposed ImbSAM, which isolates SAM for head classes and concentrates exclusively on tail classes. 
Both CCSAM and ImbSAM are designed to bolster generalization capabilities, particularly for tail classes, albeit at the cost of a slight reduction in the performance of head classes.
\section{Methodology}
\label{sec:Meth}

\subsection{Preliminaries}
\noindent \textbf{Visual Prompt Tuning.}
In VPT~\cite{jia2022visual}, $n_p$ prompt tokens $\mathbf{P}=[\mathbf{p}_1, \mathbf{p}_2, \cdots, \mathbf{p}_{n_p}] \in \mathbb{R}^{{n_p} \times D}$ are trained to facilitate transfer learning on new datasets with a constraint on the number of learnable parameters, where $D$ is the dimension of tokens in the pre-trained ViT~\cite{Dosovitskiy21vit}.
The prompt tokens encode task-specific information through collaboration with patch representations obtained from ViT blocks.
The only parameters that need training are prompt and classification header.
There are two variations: 1) VPT-Shallow, which inserts prompts only at the first block, and 2) VPT-Deep, which inserts prompts at all blocks.
Take VPT-deep as an example, it is expressed as: 
\begin{equation}
\label{eq:vpt}
    \left[\mathbf{z}_{cls}^l, \mathbf{Z}^l \right] = \text{Block}^l\left(\left[\mathbf{p}^{l-1},\mathbf{z}_{cls}^{l-1}, \mathbf{Z}^{l-1} \right] \right),
\end{equation}
where $\mathbf{p}^{l}$ is the learnable prompt, $\mathbf{z}_{cls}^{l}$ is the class token, and $\mathbf{Z}^l = \left[\mathbf{z}_1^l, \mathbf{z}_2^l, \cdots ,\mathbf{z}_{N_z}^l\right]$ represents $N_z$ patch tokens.
$\text{Block}^l$ denotes the $l$-th layer of the pre-trained ViT model.

VPT demonstrates notable efficacy in low-data scenarios and maintains its advantages across varying data scales~\cite{jia2022visual}.
Despite achieving significant performance improvements, it is noteworthy that VPT exhibits substantial differences in accuracy across different categories.
For example, on CIFAR100-LT, the original VPT achieves Top-1 accuracies of 92.11\%, 82.86\%, and 64.83\% for the head, median, and tail classes, respectively.
Its generalization ability towards tail classes can be further improved.

\noindent \textbf{Sharpness-Aware Minimization.}
SAM~\cite{Foret21Sharpness} can improve the generalization ability for models by finding an optimal with low curvature. 
That is, SAM minimizes a specific point and its neighborhoods in the loss landscape of criterion $L_\mathcal{D}(\boldsymbol{\theta})$ w.r.t data distribution $\mathcal{D}$. 
It is derived from PAC-Bayesian generalization bound~\cite{mcallester1999pac} and following~\cite{DziugaiteR17Computing, zhou2023class}, which is, for any $\rho>0$ and distribution $\mathcal{D}$, with probability $1-\rho$ over a training set $\mathcal{T}$ i.i.d sampled from $\mathcal{D}$, the criterion $L_\mathcal{D}(\boldsymbol{\theta})$ satisfies:
\begin{equation}\label{eq:sam_the}
    L_\mathcal{D}(\boldsymbol{\theta}) \leq \mathop{\max}\limits_{\| \boldsymbol{\varepsilon}\|_2^2 \leq\rho} L_\mathcal{T}(\boldsymbol{\theta}+\boldsymbol{\varepsilon}) \\ 
    + h(\dfrac{\|\boldsymbol{\theta}\|^2_2}{\rho^2}),
\end{equation}
where $h$ is a strictly increasing function. 
It can be theoretically substituted by an $L_2$ weight decay regularizer $\dfrac{\lambda}{2} \|\boldsymbol{\theta}\|^2_2$ due to its monotonicity. $\lambda$ denotes the weight decay coefficient.
\citet{Foret21Sharpness} define the sharpness aware loss $L^{SAM}_{\mathcal{T}}(\boldsymbol{\theta}) = \mathop{\max}\limits_{\| \boldsymbol{\varepsilon}\|_2 \leq \rho} L_\mathcal{T}(\boldsymbol{\theta}+\boldsymbol{\varepsilon})$ and
sharpness of the loss function $L$ as $L^{SAM}_{\mathcal{T}}(\boldsymbol{\theta})- L_\mathcal{T}(\boldsymbol{\theta})$, which measures the loss increasing rate by perturbing $\boldsymbol{\theta}$ with a nearby parameter value $\rho$. 
They propose a methodology wherein parameter values are selected by solving the sharpness aware minimization (SAM) problem:
\begin{equation}
   \boldsymbol{\theta}^* = \mathop{\min}\limits_{\boldsymbol{\theta}} \mathop{\max}\limits_{\| \varepsilon\|_2 \leq \rho} L_\mathcal{T}(\boldsymbol{\theta}+\boldsymbol{\varepsilon})+ \dfrac{\lambda}{2} \|\boldsymbol{\theta}\|^2_2.
\label{eq:sam_problem}
\end{equation}
When comparing \Cref{eq:sam_problem} to the standard training loss, it requires that the maximum loss value of the parameter within the neighborhood of radius $\rho$ centered on $\boldsymbol{\theta}$ also remains low.
The direction of the gradient of $L_{\mathcal{T}}(\boldsymbol{\theta})$ indicates the maximum value of the loss within the neighborhood.
Subsequently, for step $t$, the optimal perturbation vector $\hat{\boldsymbol{\varepsilon}}_t$ obtained based on the gradient of $L_{\mathcal{T}(\boldsymbol{\theta})}$ to obtain $L^{SAM}_{\mathcal{T}}$.
The parameters are updated w.r.t. the perturbed model parameters $\boldsymbol{\theta}+\hat{\boldsymbol{\varepsilon}}$:
\begin{align}
\label{eq:sam_update}
\hat{\boldsymbol{\varepsilon}}_t  &= \rho_{SAM} \dfrac{\nabla_{\boldsymbol{\theta}}L_{\mathcal{T}}(\boldsymbol{\theta}_t)}{\|\nabla_{\boldsymbol{\theta}}L_{\mathcal{T}}(\boldsymbol{\theta}_t)\|^2_2},   \\
\label{eq:sam_update2}
\boldsymbol{\theta}^{SAM}_{t+1} &= \boldsymbol{\theta}_t-\alpha_t\left(\nabla_{\boldsymbol{\theta}_t} L_{\mathcal{T}}(\boldsymbol{\theta}_t)|_{\boldsymbol{\theta}_t+\hat{\boldsymbol{\varepsilon}}_t} + \lambda \boldsymbol{\theta}_t\right).
\end{align}
where $\rho_{SAM}$ represents the radius of the parameter neighborhood for SAM, and $\alpha_t$ denotes the learning rate scheduled in step $t$.


\subsection{Prompt Tuning with Gaussian Neighborhood Minimization}
\label{sec:GNM}

Despite the effectiveness of SAM and its strong theoretical foundation, it exhibits twofold deficiencies:
\\
$\bullet$ For long-tailed data, $\hat{\boldsymbol{\varepsilon}}$ for tail classes is often negligible due to the dominance of head classes with a large number of samples in determining the gradient direction. Consequently, this leads to a challenge in achieving effective generalization for tail classes~\cite{zhou2023class, zhou2023imbsam}. 
\\  
$\bullet$ At each step, two gradient computations are required, namely $\nabla_{\boldsymbol{\theta}}L_{\mathcal{T}}(\boldsymbol{\theta}_t)$ and $\nabla_{\boldsymbol{\theta}}L_{\mathcal{T}}(\boldsymbol{\theta}_t+\hat{\boldsymbol{\varepsilon}}_t)$, resulting in a duplication of the computational overhead.

To address the aforementioned issues of head-class dominant optimization and double gradient computation, we propose GNM-PT.  

\noindent \textbf{Gaussian Neighborhood Minimization (GNM).}
To mitigate the presence of sharp minima and enhance the performance of VPT on long-tailed data, we can directly minimize the loss within the parameter neighborhood, thereby attaining a flattened loss landscape.
We introduce the Gaussian neighborhood loss $L_\mathcal{T}^{GN}$ on $\mathcal{T}$, which is defined as: 
\begin{equation}
L_\mathcal{T}^{GN}(\boldsymbol{\theta}) = \mathbb{E}_{ \varepsilon_i \in \mathcal{N}(0,\sigma^2)}\left[L_\mathcal{T}(\boldsymbol{\theta}+\boldsymbol{\varepsilon})\right].
\end{equation}

\begin{wrapfigure}{l}{0.45\textwidth}
\vspace{-12pt}
\centering
\includegraphics[width=0.95\linewidth]{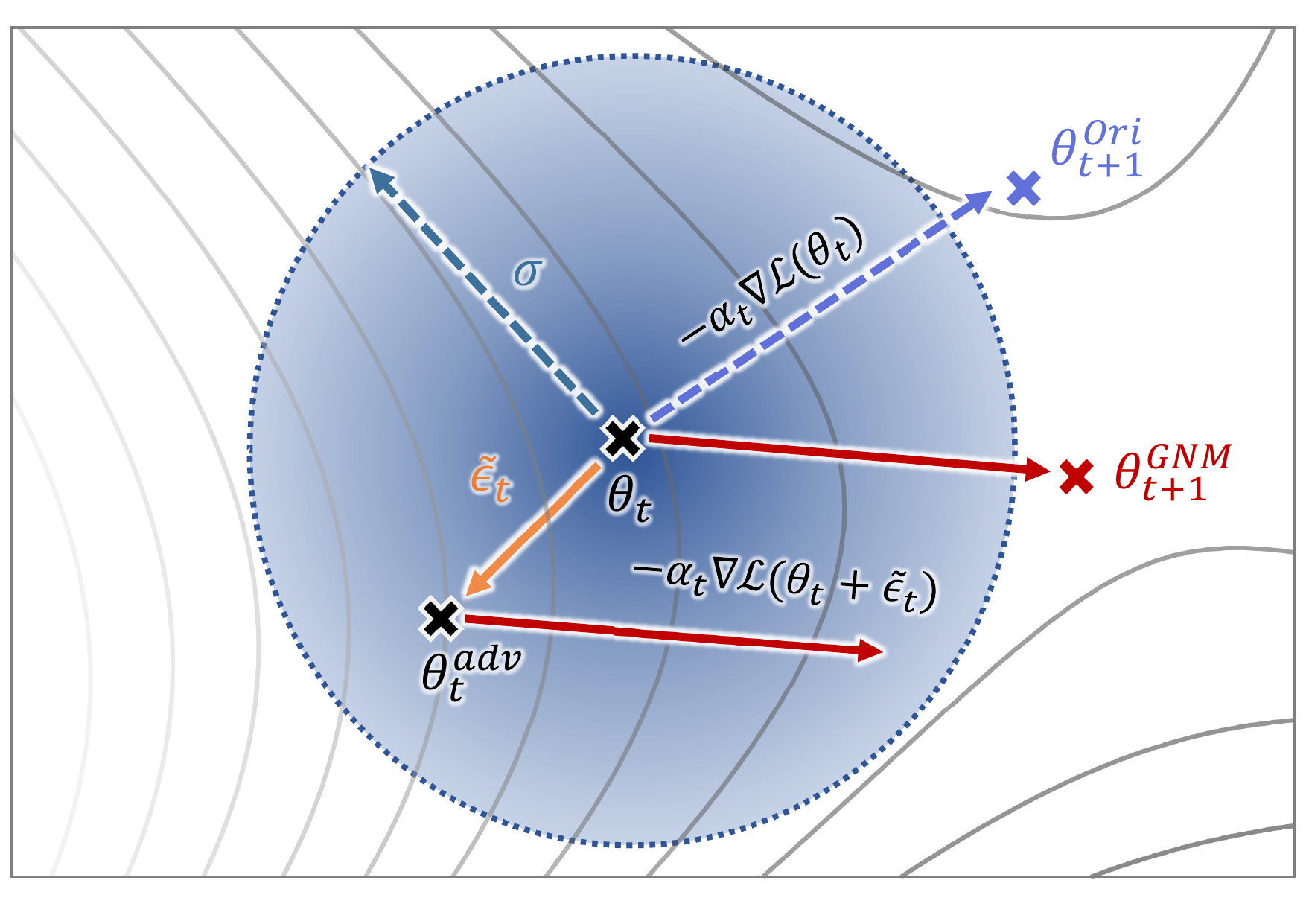}  
\caption{Schematic of optimization direction in GNM\protect\footnotemark. $\boldsymbol{\theta}^{Ori}_{t+1}$ and $\boldsymbol{\theta}^{GNM}_{t+1}$ represent the gradient update w.o. and w. GNM for step $t+1$.}
\label{fig:GNM_sketch} 
\vspace{-15pt}
\end{wrapfigure}
\footnotetext{The loss landscape for the background adheres to the same settings in Figure 2 of SAM~\cite{Foret21Sharpness}.}

Optimizing $L_\mathcal{T}^{GN}$ is equivalent to optimizing an upper bound of the distribution $\mathcal{D}$ using the training set $\mathcal{T}$ sampled i.i.d. from $\mathcal{D}$.
The detailed theoretical proof will be discussed in the following section.
Then, substituting $L^{SAM}_{\mathcal{T}}$ in \Cref{eq:sam_problem} with $L_\mathcal{T}^{GN}$, we can obtain the parameter update strategy of GNM:
\begin{align}
\label{eq:gnm_update1}    
\tilde{\boldsymbol{\varepsilon}}_t &= \rho_{GNM} \cdot \left[ \varepsilon_i \right]_{i=1}^k, \;  \varepsilon_i \sim \mathcal{N}(0, \sigma^2), \\
\label{eq:gnm_update2} 
\boldsymbol{\theta}^{GNM}_{t+1} &= \boldsymbol{\theta}_t-\alpha_t\left(\nabla_{\boldsymbol{\theta}_t} L_{\mathcal{T}}(\boldsymbol{\theta}_t)|_{\boldsymbol{\theta}_t+\tilde{\boldsymbol{\varepsilon}}_t} + \lambda \boldsymbol{\theta}_t\right).
\end{align}
$\rho_{GNM}$ in \Cref{eq:gnm_update1} represents the radius of the parameter neighborhood for GNM. 
The detailed derivation of the gradient for GNM can be found in ~\Cref{sec:grad}.
\Cref{fig:GNM_sketch} schematically illustrates a single GNM parameter update.

\hspace*{\fill}
\begin{remark}\label{remark:head-bias}
\textit{\textbf{Handling Long-Tailed Data.}}
GNM is better suited for long-tailed data.
\end{remark} 
\begin{proof}
\vspace{-12pt}
If $\mathcal{T}$ is a long-tailed training set i.i.d. sampled from $\mathcal{D}$, the direction of gradients in existing methods such as SGD and SAM is predominantly influenced by head classes. 
(The detailed proof can be found in \Cref{sec:remark1}.)
Consequently, optimization through \Cref{eq:sam_update}, which is sample-dependent, will be determined mainly by the head classes.
Conversely, \Cref{eq:gnm_update1} is in a sample-independent manner, avoiding classes with large numbers of samples that dominate the direction of the perturbation vector. 
\end{proof}

\begin{remark}
\textit{\textbf{Computational Time.}} GNM saves computational overhead compared to SAM.
\end{remark}
\begin{proof}
\vspace{-12pt}
Even when disregarding second- and higher-order terms (for example, see \citet{Foret21Sharpness, zhou2023class, mi2022make} for more details), it is apparent from \Cref{eq:sam_update} that solving for $\hat{\varepsilon}_t$ necessitates the computation of one gradient involving a forward and backward pass, while calculating $\boldsymbol{\theta}_{t+1}$ requires another forward and backward pass.
As a result, in SAM calculation, which retains only first-order terms, the parameter update already requires an additional forward and backward pass, undesirably doubling the computation time.
Conversely, \Cref{eq:gnm_update1} mitigates the computational burden and improves the precision of perturbations.
\end{proof}

\begin{remark} 
\textit{\textbf{Loss landscape.}} GNM can achieve a flat loss landscape for VPT.
\end{remark}
\Cref{fig:intro_ce} and \Cref{sec:further_ana} empirically demonstrate the loss landscape obtained by GNM for VPT is flattened than the original VPT and SAM.
\Cref{sec:add_res_AF} and \Cref{sec:add_res_loss} demonstrate that besides VPT, GNM can also improve other PEFT methods such as AdapterFormer~\cite{chen2022adaptformer} and other backbones such as ResNet based models.



Although GNM is not affected by class sizes and improves the generalization performance of each category equally, head-class bias caused by the classifier still exists.  
Two-stage strategy~\cite{wang2020devil, decouple20, Dong23LPT} is effective. 
Classifier re-balance strategy, including deferred re-weighting/sampling (DRW or DRS)~\cite{Kaidi2019}, classifier Re-training (cRT)~\cite{decouple20}, nearest class mean classifier (NCM)~\cite{decouple20}, to name a few, can be employed.
The overall training procedure of GNM-PT is summarized in \Cref{sec:alg}. 

\subsection{Theoretical Analysis of GNM}
\label{sec:theo_GNM}
This section explains GNM from the theoretical perspective.  
We introduce the following theorem to demonstrate the compactness of the upper bound of the loss function across the distribution $\mathcal{D}$.

\begin{theorem}\label{the1}
For any $0<\delta<1$, and number of samples $n\in \mathbb{N}^{+}$, with probability $1-\delta$ over the training set $\mathcal{T}$ sampled i.i.d. from a distribution $\mathcal{D}$, the following generalization bound w.r.t. model parameters $\boldsymbol{\theta}$ holds:
\begin{equation}\label{eq:the1_gnm}
    L_\mathcal{D}(\boldsymbol{\theta}) \leq \mathbb{E}_{ \varepsilon_i \in \mathcal{N}(0,\sigma^2)}\left[L_\mathcal{T}(\boldsymbol{\theta}+\boldsymbol{\varepsilon})\right]\\ 
    + h(\dfrac{\|\boldsymbol{\theta}\|^2_2}{4\sigma^2}),
\end{equation}
where $h:\mathbb{R}^+\rightarrow \mathbb{R}^+$ is a strictly increasing function.
\end{theorem}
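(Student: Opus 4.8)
The plan is to derive the bound through a PAC-Bayesian argument, mirroring the template that produces the SAM bound in \Cref{eq:sam_the} but exploiting the fact that a Gaussian posterior yields the \emph{expected} empirical loss rather than its maximum over a ball. Concretely, I would instantiate McAllester's PAC-Bayes generalization bound with a data-independent prior $P=\mathcal{N}(0,\sigma_P^2 I_k)$ over the $k$ trainable parameters and the posterior $Q=\mathcal{N}(\boldsymbol{\theta},\sigma^2 I_k)$ centered at the current parameters. With probability at least $1-\delta$ over $\mathcal{T}$ sampled i.i.d. from $\mathcal{D}$, this gives
\begin{equation}
\mathbb{E}_{\boldsymbol{\varepsilon}}\left[L_\mathcal{D}(\boldsymbol{\theta}+\boldsymbol{\varepsilon})\right] \leq \mathbb{E}_{\boldsymbol{\varepsilon}}\left[L_\mathcal{T}(\boldsymbol{\theta}+\boldsymbol{\varepsilon})\right] + \sqrt{\dfrac{\mathrm{KL}(Q\|P)+\ln(n/\delta)}{2(n-1)}},
\end{equation}
where $\boldsymbol{\varepsilon}\sim\mathcal{N}(0,\sigma^2 I_k)$. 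The crucial observation is that the empirical term on the right is already exactly the Gaussian neighborhood loss $L_\mathcal{T}^{GN}(\boldsymbol{\theta})$, so, unlike SAM, no further relaxation of the expectation into a maximum over the ball is needed; this is precisely the source of the tighter bound advertised in the introduction.

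Next I would evaluate the KL divergence between the two isotropic Gaussians,
\begin{equation}
\mathrm{KL}(Q\|P)=\dfrac{\|\boldsymbol{\theta}\|_2^2}{2\sigma_P^2}+\dfrac{k}{2}\left(\dfrac{\sigma^2}{\sigma_P^2}-1-\ln\dfrac{\sigma^2}{\sigma_P^2}\right),
\end{equation}
and fix the prior variance at $\sigma_P^2=2\sigma^2$, which turns the parameter-dependent part into exactly $\|\boldsymbol{\theta}\|_2^2/(4\sigma^2)$; the remaining summand is a constant depending only on $k$. Substituting back, the whole complexity term is a square root of $\|\boldsymbol{\theta}\|_2^2/(4\sigma^2)$ plus quantities that are constant once $k,n,\delta,\sigma$ are fixed. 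Since $x\mapsto\sqrt{(x+c)/(2(n-1))}$ is strictly increasing for fixed $c$, this term can be written as $h\!\left(\|\boldsymbol{\theta}\|_2^2/(4\sigma^2)\right)$ for a strictly increasing $h:\mathbb{R}^+\rightarrow\mathbb{R}^+$, matching the claimed form of \Cref{eq:the1_gnm}.

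It remains to replace the expected population loss $\mathbb{E}_{\boldsymbol{\varepsilon}}[L_\mathcal{D}(\boldsymbol{\theta}+\boldsymbol{\varepsilon})]$ on the left with $L_\mathcal{D}(\boldsymbol{\theta})$. Because $\boldsymbol{\varepsilon}$ is zero-mean, $\mathbb{E}_{\boldsymbol{\varepsilon}}[\boldsymbol{\theta}+\boldsymbol{\varepsilon}]=\boldsymbol{\theta}$, so Jensen's inequality gives $L_\mathcal{D}(\boldsymbol{\theta})\leq\mathbb{E}_{\boldsymbol{\varepsilon}}[L_\mathcal{D}(\boldsymbol{\theta}+\boldsymbol{\varepsilon})]$ whenever $L_\mathcal{D}$ is convex in $\boldsymbol{\theta}$; chaining this with the PAC-Bayes inequality above yields the theorem.

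I expect this last passage to be the main obstacle, since the convexity needed for the Jensen step does not hold globally for deep networks. The honest route is either to invoke it locally around the minimum (standard in this line of work) or to control the gap $|\mathbb{E}_{\boldsymbol{\varepsilon}}[L_\mathcal{D}(\boldsymbol{\theta}+\boldsymbol{\varepsilon})]-L_\mathcal{D}(\boldsymbol{\theta})|$ via a second-order Taylor expansion whose residual is absorbed into $h$. A secondary technicality is that the prior must be fixed before seeing the data while the natural scale $\sigma_P$ may depend on $\|\boldsymbol{\theta}\|$; this is handled rigorously by taking a discrete family of candidate priors together with a union bound over $\delta$, incurring only an extra logarithmic factor that is again swallowed by the strictly increasing $h$.
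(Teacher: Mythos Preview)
Your proposal is correct and follows essentially the same PAC-Bayesian route as the paper, but the execution differs in two minor ways that are worth noting. First, the paper does not carry out the KL computation itself; it simply invokes the ready-made bound from \citet{Foret21Sharpness} (their Theorem~2) and \citet{zhou2023class}, which already packages a covering argument over candidate prior scales, giving the term $\tfrac{1}{4}k\log\!\bigl(1+\|\boldsymbol{\theta}\|_2^2/(k\sigma^4)\bigr)$, and then applies $\log(1+x)<x$ to reduce it to $\|\boldsymbol{\theta}\|_2^2/(4\sigma^2)$. Your direct choice $\sigma_P^2=2\sigma^2$ reaches the same coefficient more elementarily and makes the ``secondary technicality'' you mention moot, since this prior does not depend on $\|\boldsymbol{\theta}\|$ at all. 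Second, for the step $L_\mathcal{D}(\boldsymbol{\theta})\leq\mathbb{E}_{\boldsymbol{\varepsilon}}[L_\mathcal{D}(\boldsymbol{\theta}+\boldsymbol{\varepsilon})]$ the paper does not appeal to Jensen or local convexity; it simply posits it as the ``condition that adding Gaussian perturbation should not decrease the test error'' (\Cref{eq:condition}), exactly as SAM does. So your concern about this being ``the main obstacle'' is legitimate, but the paper sidesteps it by assumption rather than by any of the remedies you sketch.
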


\begin{proof} 
Based on the condition that adding Gaussian perturbation should not decrease the test error, $L_{\mathcal{D}}$ satisfy:
\begin{equation}\label{eq:condition}
    L_{\mathcal{D}}(\boldsymbol{\theta}) \leq \mathbb{E}_{ \varepsilon_i \in \mathcal{N}(0,\sigma^2)}\left[L_\mathcal{D}(\boldsymbol{\theta}+\boldsymbol{\varepsilon})\right].
\end{equation}
By Theorem 2 in \citet{Foret21Sharpness} and Theorem 1 in \citet{zhou2023class}, the Gaussian perturbation satisfy:
\begin{gather} 
       \mathbb{E}_{ \varepsilon_i \in \mathcal{N}(0,\sigma^2)} \left[ L_\mathcal{D}(\boldsymbol{\theta}+\boldsymbol{\varepsilon}) \right] \leq  \mathbb{E}_{ \varepsilon_i \in \mathcal{N}(0,\sigma^2)} \left[ L_\mathcal{T}(\boldsymbol{\theta}+\boldsymbol{\varepsilon}) \right] \notag   \\   
    +\sqrt{\dfrac{ \dfrac{1}{4}k\log\left(1+\dfrac{ \|\boldsymbol{\theta} \|_2^2 }{k\sigma^4} \right)+\log(\dfrac{n}{\delta})+2\log(6n+3k)+ \dfrac{1}{4} }{n-1}}, \label{eq:e_LD}
\end{gather}
where $k$ is the dimension of $\boldsymbol{\theta}$.
Since $\log(1+x)<x$ holds for all $x>0$, \Cref{eq:e_LD} can be simplified to:
\begin{gather}
    \mathbb{E}_{ \varepsilon_i \in \mathcal{N}(0,\sigma^2)} \left[ L_\mathcal{D}(\boldsymbol{\theta}+\boldsymbol{\varepsilon}) \right] \leq  \mathbb{E}_{ \varepsilon_i \in \mathcal{N}(0,\sigma^2)} L_\mathcal{T}(\boldsymbol{\boldsymbol{\theta}}+\boldsymbol{\varepsilon}) \nonumber \\
    +\sqrt{\dfrac{\dfrac{\|\boldsymbol{\theta}\|^2_2}{4\sigma^2}+\log(\dfrac{n}{\delta})+\mathcal{O}(1)}{n-1}}.
    \label{eq:GNM}
\end{gather}   
The term containing square roots in the above expression is a strictly increasing function. 
Therefore, by combining it with \Cref{eq:condition}, \Cref{the1} is proved.
\end{proof}
Similar to \cite{Foret21Sharpness,zhou2023class}, $h$ in \Cref{eq:the1_gnm} can be replaced by $L_2$ weight decay regularizer $\frac{\lambda}{2} \|\boldsymbol{\theta}\|^2_2$. 
Minimizing $L_{\mathcal{D}}$ can be achieved by solving the following GNM problem:
\begin{equation}
    \min_{\boldsymbol{\theta}}  L_\mathcal{T}^{GN}(\boldsymbol{\theta}) + \frac{\lambda}{2} \|\boldsymbol{\theta}\|^2_2.
\end{equation}
Hence, the parameter updates given by \Cref{eq:gnm_update1} and \Cref{eq:gnm_update2} for GNM can minimize the upper bound given of $L_\mathcal{D}(\boldsymbol{\theta})$ by \Cref{the1}.

\begin{remark} 
\textit{\textbf{Upper Bound for Loss Function.}} GNM achieves a tighter upper bound for loss function than SAM.
\end{remark}
\begin{proof}
\vspace{-12pt}
According to \Cref{eq:sam_the}, $L_{\mathcal{T}}^{SAM}$ is obtained by minimizing the maximum of the loss within the parameter neighborhood of radius $\rho_{SAM}$.
By adjusting the variance $\sigma$, $L_{\mathcal{T}}^{GN}$ is obtained by minimizing the average value of the loss function within the parameter neighborhood $r_{GNM}$.
It is evident that when $\rho_{SAM} \geq \rho_{GNM}$, 
$\mathbb{E}_{\rho_{SAM}}\left[L_\mathcal{T}(\boldsymbol{\theta}+\boldsymbol{\varepsilon})\right] \leq \mathop{\max}_{\rho_{GNM}} \left[ L_\mathcal{T}(\boldsymbol{\theta}+\boldsymbol{\varepsilon}) \right]$.
Therefore, $L_{\mathcal{T}}^{GN}$ is a tighter upper bound on the loss over $\mathcal{D}$ than $L_{\mathcal{T}}^{SAM}$.  
\end{proof}

\section{Experiment}
\label{sec:exp}

\subsection{Datasets}
\label{sec:dataset}
\noindent \textbf{CIFAR100-LT.}
We adopt the same settings utilized in \cite{cui2019class, Kaidi2019} to establish the long-tailed version by downsampling the original CIFAR100 dataset~\cite{krizhevsky2009learning} with different imbalance ratios  
$IR = n_{\max}/n_{\min}$, where $n_{\max}$, $n_{\min}$ represent the class sizes of the most and the least frequent classes, respectively.
Following \cite{LiMK2022GCL}, we set the imbalance ratios at 200, 100, 50, and 10.

\noindent \textbf{Places-LT.}
It is artificially truncated from its balanced version, Places365~\cite{zhou2017places}.
The long-tailed version was first created by \citet{LiuZW19LTOW}.
Places-LT consists of 62.5K training images with an imbalance ratio of 996.

\noindent \textbf{iNaturalist2018.}
iNaturalist is a substantial real-world dataset that inherently exhibits an exceedingly imbalanced distribution.
In our experiments, we leverage the widely employed 2018 version~\cite{Horn_2018_CVPR} (iNat for short), encompassing 437.5K images across 8,142 distinct species. 
This dataset features an imbalance ratio of 512.

\subsection{Implementation Details}
\noindent \textbf{Evaluation Protocol.}
Following the fundamental assumption that every class carries equal importance, all classes with varying frequencies in the training set are granted an equal number of samples during testing.
Top-1 classification accuracy is the primary metric for assessing the performance of various methods.
Following \citet{LiuZW19LTOW}, we additionally provide accuracy measurements for three class splits based on the number of training data: Head ($>100$ images), Medium (Med for short, $20\sim 100$ images), and Tail ( $\leq20$ images).

\noindent \textbf{Model and Parameter Settings.}
Following \citet{Dong23LPT}, we employ ViT-B/16 pre-trained on ImageNet-21K and VPT-deep for prompt tuning, and GCL~\cite{LiMK2022GCL} as the loss function. 
The same data augmentation strategies outlined in \cite{RenJW2020Balanced, liJ2022nested, Jin2023shike} are adopted, consistent with widely adopted practices among mainstream methods.
We employ SGD with GNM as an optimizer and set the batch size to 128, a learning rate of 0.01, accompanied by a cosine learning rate scheduler.
For parameter settings of the Gaussian distribution parameters $(0$, $\sigma^2)$ mentioned in \Cref{sec:GNM}, we exploit the same strategy as \citet{LiMK2022GCL}.
Specifically, we set $\sigma=\frac{1}{3}$ meanwhile clamping $\varepsilon$ within the range $\left[-1,1\right]$ to ensure that its amplitude remains within one and use a hyper-parameter $\rho$ to control the perturbation magnitude. 
We adopt DRW for classifier re-balance.
Notably, LPT~\cite{Dong23LPT} also employs a re-balance strategy during the group prompt tuning stage.
For CIFAR100-LT and iNat, we fine-tuned models for 70 epochs, with the final 10 epochs for DRW.
For Places-LT, the models undergo a fine-tuning process spanning 100 epochs, with the last 40 epochs for DRW. 

\subsection{Comparison with Prior Arts}

\subsubsection{Compared Methods}
We compare our proposed GNM-PT with several state-of-the-art methods, broadly categorized into two main types.

\begin{wraptable}{l}{0.51\textwidth}
\small
\centering
\vspace{-12pt}
 \centering  
 \caption{Comparison on CIFAR100-LT w.r.t top-1 classification accuracy (\%).} 
 \label{tab:cifar_results}
 \vspace{6pt}
 \setlength{\tabcolsep}{5pt}
 \renewcommand{\arraystretch}{1.05}
  {\begin{threeparttable} 
  \begin{tabular}{l | c c c c }
  \hlinew{1pt}
  Method  &200 &100 &50 & 10 \\
  \hline 
  \multicolumn{5}{c}{DNN-based model (Backbone: ResNet32)} \\
  \hline 
  BBN~\cite{bbn20}  & 37.2 & 42.6 & 47.0 & 59.1\\
  RIDE~\cite{WangXD21RIDE} & 45.8 & 50.4 & 55.0 & -\\ 
  MisLAS~\cite{mislas21} &43.5 &47.0 &52.3& 63.2 \\
  BCL~\cite{Zhu2022CVPR} & - & 51.9 & 56.6 & 64.9 \\ 
  GCL~\cite{LiMK2022GCL} & 44.8 & 48.6 & 53.6 & -\\
  NCL~\cite{liJ2022nested} & - & 54.2 & 58.2 & -\\
  GPaCo~\cite{cui2023Gpaco}  & - & 52.3 & 56.4 & 65.4\\
  SHIKE~\cite{Jin2023shike}& - & 56.3 & 59.8 & -\\
  \cdashline{1-5} 
    \multicolumn{5}{c}{DNN-based model with SAM} \\
  \cdashline{1-5} 
  CCSAM~\cite{zhou2023class} & 45.7 & 50.8 & 53.9 & - \\
  ImbSAM~\cite{zhou2023imbsam}  & - & 54.8 & 59.3 & 59.7 \\
  \hline
    \multicolumn{5}{c}{Self-attention-based model (Backbone: ViT-B/16)} \\
  \hline
  VPT~\cite{jia2022visual} & 72.8 & 81.0 & 84.8 & 89.6\\
  LiVT~\cite{Xu2023Learning} & - & 58.2 & - & 69.2 \\ 
  LPT~\cite{Dong23LPT} & \textbf{87.9} & \textbf{89.1} & \textbf{90.0} & \textbf{91.0}\\ 
  \rowcolor{mygray}
  GNM-PT (ours)  & \underline{\textbf{89.2}} & \underline{\textbf{90.3}} & \underline{\textbf{91.2}} & \underline{\textbf{91.8}} \\ 
  \hlinew{1pt}
 \end{tabular}
    \begin{tablenotes}
    \footnotesize
    \item \textbf{Note}: The best and second-best results are shown in \underline{\textbf{underline bold}} and \textbf{bold}, respectively.
    \end{tablenotes}
 \end{threeparttable}
 }
\vspace{-12pt}
\end{wraptable}

\noindent\textbf{DNN-based model.} 
We compare with (1) two-stage methods, namely BBN \cite{bbn20}, LWS \cite{decouple20}, MiSLAS~\cite{mislas21};
(2) logit adjustment methods, i.e., GCL \cite{LiMK2022GCL} and LDAM \cite{Kaidi2019};
(3) ensembling learning methods, including, RIDE~\cite{WangXD21RIDE}, NCL~\cite{liJ2022nested}, and SHIKE~\cite{Jin2023shike};
and (4) contrastive learning, represented by GPaCo \cite{cui2023Gpaco}.
Additionally, we compared two recently proposed SAM-based methods, CCSAM~\cite{zhou2023class} and ImbSAM~\cite{zhou2023imbsam}, explicitly designed to address long-tail data.

\noindent\textbf{MHSA-based model.}

Recently, MHSA-based models represented by ViT have been employed in long-tail visual recognition.
We compare with visual-only methods, including LiVT~\cite{Xu2023Learning}, VPT~\cite{jia2022visual}, BALLAD~\cite{ma2021simple}, LPT~\cite{Dong23LPT}, and Decoder~\cite{wang2023exploring}. 
All methods were implemented using ViT-B/16 for a fair comparison.
In addition, VL-LTR \cite{tian2022vl}, RAC~\cite{Long2022RAC} and GML~\cite{SuhS23GML} are also MHSA-based models which use supplementary data (i.e., text information).
We also report the results obtained by these methods for reference.

\subsubsection{Comparison Results}
\label{sec:com_res}

\textbf{Comparison on CIFAR100-LT.}
We present the comparison results for CIFAR100-LT in \Cref{tab:cifar_results}. 
Our proposed GNM-PT exhibits superior performance across all commonly used imbalance ratios compared to the competing methods.
Notably, as the imbalance ratio increases, the effectiveness of our GNM-PT becomes increasingly apparent on CIFAR100-LT.
Specifically, our proposed method achieves improvements of 1.3\%, 1.2\%, 1.2\%, and 0.8\% over the second-best method, namely LPT~\cite{Dong23LPT},  for imbalance ratios of 200, 100, 50, and 10, respectively.

\noindent \textbf{Comparison on iNat.}
\Cref{tab:inat} provides results on iNat. 
The proposed GNM-PT achieves a top-1 classification accuracy of 76.5\%, surpassing DNN-based methods.
Compared with other visual-only MHSA-based methods that exclusively rely on visual data, our improvement may not be substantial (76.5\% 
However, it is noteworthy that GNM-PT is trained with a relatively small number of epochs (70 and 80 epochs without and with DRW, respectively).
In contrast, LPT~\cite{Dong23LPT} requires 160 epochs (80 for shared prompt tuning and 80 for group prompt tuning), while LiVT~\cite{Xu2023Learning} requires 100 epochs.
Our proposed method can even achieve results comparable to those with supplementary data.
For example, VL-LTR~\cite{tian2022vl}, which requires image-text pairs, achieves an accuracy of 76.8\%, only 0.3\% and 0.5\% higher than GNM-PT with and without DRW, respectively.
Notably, the adoption of DRW in GNM-PT has the potential to enhance overall performance, albeit with the trade-off of sacrificing head-class accuracy to bolster tail-class accuracy. 
This observation may be attributed to suboptimal parameter selection in calculating effective numbers in DRW, an aspect that we plan to delve into in future work.

\begin{minipage}[t]{\textwidth}
\vspace{-3pt}
\begin{minipage}[t]{0.5\textwidth} \small
 \centering  
 \makeatletter\def\@captype{table}\makeatother\caption{Acc. (\%) comparison on iNat.}
 \label{tab:inat}
 \vspace{6pt}
 \setlength{\tabcolsep}{5pt}
 \renewcommand{\arraystretch}{1.}{
  \begin{tabular}{l |c c c  c }  
  \hlinew{1pt}
   Method & Head & Med & Tail & Overall \\
  \hline 
    \multicolumn{5}{ c }{DNN-based model (Backbone: ResNet50)} \\
  \hline     
  LWS~\cite{decouple20}    & 72.9 & 71.2 & 69.2 & 70.5\\
  RIDE~\cite{WangXD21RIDE}   & 76.5 & 74.2 & 70.5 & 72.8 \\ 
  MisLAS~\cite{mislas21}  & 73.2 & 72.4 &70.4 & 71.6 \\
  GCL~\cite{LiMK2022GCL}  & - & - &- & 72.0\\
  NCL~\cite{liJ2022nested}  & 72.7 & 75.6 & 74.5 & 74.9 \\
  GPaCo~\cite{cui2023Gpaco} & - & - & - & 75.4\\
  SHIKE~\cite{Jin2023shike}  & - & - & - & 75.4 \\
  \cdashline{1-5} 
    \multicolumn{5}{ c }{DNN-based model with SAM  } \\
  \cdashline{1-5} 
   {\small LDAM+SAM~\cite{rangwani2022escaping}} & 64.1 & 70.5 & 71.2 & 70.1 \\   
  CCSAM~\cite{zhou2023class} & 65.4 & 70.9 & 72.2  & 70.9 \\ 
  ImbSAM~\cite{zhou2023imbsam} & 68.2 & 72.5 & 72.9 & 71.1\\
  \hline
    \multicolumn{5}{ c }{MHSA-based model (Backbone: ViT-B/16) } \\
  \hline
    \multicolumn{5}{ c }{Supplementary with linguistic data} \\
  \cdashline{1-5} 
    VL-LTR~\cite{tian2022vl} & - & - & - & \textbf{76.8}\footnotemark[3] \\ 
    RAC~\cite{Long2022RAC}  & 75.9 & 80.5 & 81.1 & \underline{\textbf{80.2}}\footnotemark[3] \\ 
  \cdashline{1-5} 
    \multicolumn{5}{ c }{Visual-only} \\
  \cdashline{1-5} 
  Decoder~\cite{wang2023exploring} & - & - & - & 59.2 \\ 
  LPT~\cite{Dong23LPT} & - & - & 79.3 & 76.1 \\ 
  LiVT~\cite{Xu2023Learning}  & 78.9 & 76.5 & 74.8 & 76.1 \\ 
  \rowcolor{mygray}
  GNM-PT (ours) &61.5 & 77.1 & 79.3 & \underline{\textbf{76.5}} \\ 
  \rowcolor{mygray}
  GNM-PT (ours) &76.3& 77.6 & 75.0 & \textbf{76.3}\footnotemark[4]  \\ 
  \hlinew{1pt}
 \end{tabular}}
\end{minipage}
  \begin{minipage}[t]{0.5\textwidth} \small
 \centering  
 \makeatletter\def\@captype{table}\makeatother\caption{Acc. (\%) comparison on Places-LT.}
 \vspace{6pt}
 \label{tab:pla}
 \setlength{\tabcolsep}{5pt}
 \renewcommand{\arraystretch}{1.11}{
  \begin{tabular}{l | c c c c }  
  \hlinew{1pt}
  Method  & Head & Med &Tail & Overall \\
  \hline 
    \multicolumn{5}{c}{DNN-based model (Backbone: ResNet152)} \\
  \hline     
  LWS~\cite{decouple20}   & 40.6 & 39.1 & 28.6 & 37.6\\
  RIDE~\cite{WangXD21RIDE}   & 44.4 & 40.6 & 33.0 & 40.4 \\ 
  MisLAS~\cite{mislas21}  & 39.6 & 43.3 & 36.1 & 40.4 \\
  GCL~\cite{LiMK2022GCL}  & 38.6 & 42.6 & 38.4 & 40.3 \\
  NCL~\cite{liJ2022nested}  & - & - & - & 41.8 \\
  GPaCo~\cite{cui2023Gpaco}  & 39.5 & 47.2 & 33.0 & 41.7\\
  SHIKE~\cite{Jin2023shike}  & 43.6 & 39.2 & 44.8 & 41.9 \\
  \cdashline{1-5} 
    \multicolumn{5}{c}{DNN-based model with SAM} \\
  \cdashline{1-5} 
  CCSAM~\cite{zhou2023class}  & 41.2 & 42.1 & 36.4  & 40.6 \\ 
  \hline
    \multicolumn{5}{c}{MHSA-based model  (Backbone: ViT-B/16)} \\
  \hline
    \multicolumn{5}{c}{Supplementary with linguistic data} \\
  \cdashline{1-5} 
    VL-LTR~\cite{tian2022vl}   & 54.2 & 48.5 & 42.0 & \underline{\textbf{50.1}}\footnotemark[3] \\ 
    RAC~\cite{Long2022RAC}  & 48.7 & 48.3 & 41.8 & 47.2\footnotemark[3]\\ 
  \cdashline{1-5} 
    \multicolumn{5}{c}{Visual-only} \\
  \cdashline{1-5} 
  Decoder ~\cite{wang2023exploring}   & - & - & - & 46.8 \\ 
  LPT~\cite{Dong23LPT}   & 47.6 & 52.1 & 48.4 & 49.7\footnotemark[5]\\ 
  LiVT~\cite{Xu2023Learning}   & 48.1 & 40.6 & 27.5 & 40.8 \\ 
  \rowcolor{mygray}
  GNM-PT (ours)     & 46.6 & 53.3 & 49.4 & \underline{\textbf{50.1}} \\ 
  \rowcolor{mygray}
  GNM-PT (ours) & 48.6 & 52.1 & 47.9 & \textbf{50.0}\footnotemark[4]  \\ 
  \hlinew{1pt}
 \end{tabular}}
 \end{minipage}
 \vspace{15pt}
\end{minipage}

\footnotetext[3]{The results are obtained with the assistance of textual data.}
\footnotetext[4]{The result is obtained without DRW. }
\footnotetext[5]{The results are obtained by reproducing the original paper with the recommended settings. }

\noindent \textbf{Comparison on Places-LT.}
From \Cref{tab:pla}, we can observe that GNM-PT continues to outperform existing methods.
Similarly to iNat, GNM-PT obtains performance equivalent to LPT with fewer training epochs and outperforms LiVT by nearly 10\%.
Even when compared to VL-LTR and RAC, which leverage additional auxiliary data, GNM-PT still achieves satisfying performance.
Additionally, from \Cref{tab:pla}, it can be observed that DRW improves tail classes at the expense of significant degradation in head classes on Places-LT. While it resulted in an overall improvement of 1\%, the head classes decreased by 2\%.
This indicates that the chosen effective number employed by DRW may not be optimal, warranting further investigation.
More results on imageNet-LT can be found in \Cref{sec:img}.

\subsection{Further Analysis}
\label{sec:further_ana}
To ensure a fair comparison, the experiments in this section are all executed on the following hardware: Core(TM) i9-13900K, operating at 3.00GHz, equipped with 128GB RAM, and a single NVIDIA GeForce RTX 4090 GPU. 
The dataset is CIFAR100-LT with $IR=100$.

\noindent \textbf{GNM vs. SAM.}
To demonstrate the superiority of GNM, we conducted a comparative analysis with the SAM from two perspectives: classification accuracy and computational efficiency. 
We employ both CE loss and GCL loss utilizing the CIFAR100-LT dataset with an imbalance ratio of 100.
Except for incorporating the optimization of SAM or GNM, all other settings remain identical.
\Cref{tab:com_sam} shows the results. 
We can observe that SAM entails a computation time exceeding 1.8 times that of the original method compared to the baseline methods without additional optimization technologies.
In contrast, GNM incurs only a negligible increase in computation time, namely, less than 2 seconds per epoch.
In addition to time savings, GNM also manifests a discernible improvement in accuracy. 
Despite our straightforward adoption of random perturbation vector $\tilde{\boldsymbol{\varepsilon}}$ (as detailed in \Cref{sec:GNM}), its performance is superior to that of the theoretical optimal perturbation vector $\hat{\boldsymbol{\varepsilon}}$ for seeking the maximum loss value within the neighborhood.
It is worth noting that all experiments in this section employ the same number of training epochs. 
SAM and GNM \textit{do not} affect the convergence speed of the network.
The effectiveness across various classes is visualized in \Cref{fig:histogram}.
While SAM declines the performance of GCL within the tail classes, our proposed GNM consistently improves performance across all categories in every scenario.
Additional comparison results for the long-tailed SAM method can be found in \Cref{sec:add_res_sam} and \Cref{supp:opt_com}.
Further comparisons for balanced softmax (BASM)~\cite{RenJW2020Balanced} and ResNet-152 backbone can be found in \Cref{sec:add_res_loss}

\begin{table}[htpb]
\small 
\centering
 \caption{Comparison between SAM and the proposed GNM. NET represents Native Execution Time.} 
 \label{tab:com_sam}
 \setlength{\tabcolsep}{15pt}
 \renewcommand{\arraystretch}{1.1}
  {
  \begin{tabular}{ l c   c }
  \hlinew{1pt} 
  Method & Acc. (\%) & NET (s) \\ 
  \hline
  CE    & 81.02 &  39.78\\
  CE+SAM & 82.48 & 72.51\\
  CE+GNM & \textbf{82.50} & 40.16 ({\color{purple}{$\downarrow$ 44.61\%}} )\\ 
  \cdashline{1-3}
  GCL+DRW  & 89.58 & 40.00 \\
  GCL+DRW+SAM & 89.69 & 74.36   \\
  GCL+DRW+GNM & \textbf{90.28} & 41.87 ({\color{purple}{$\downarrow$ 43.69\%}} )\\
  \hlinew{1pt}
 \end{tabular}
 }
\end{table}


\begin{figure*}[htpb]
\centering
    \includegraphics[width=0.8\textwidth]{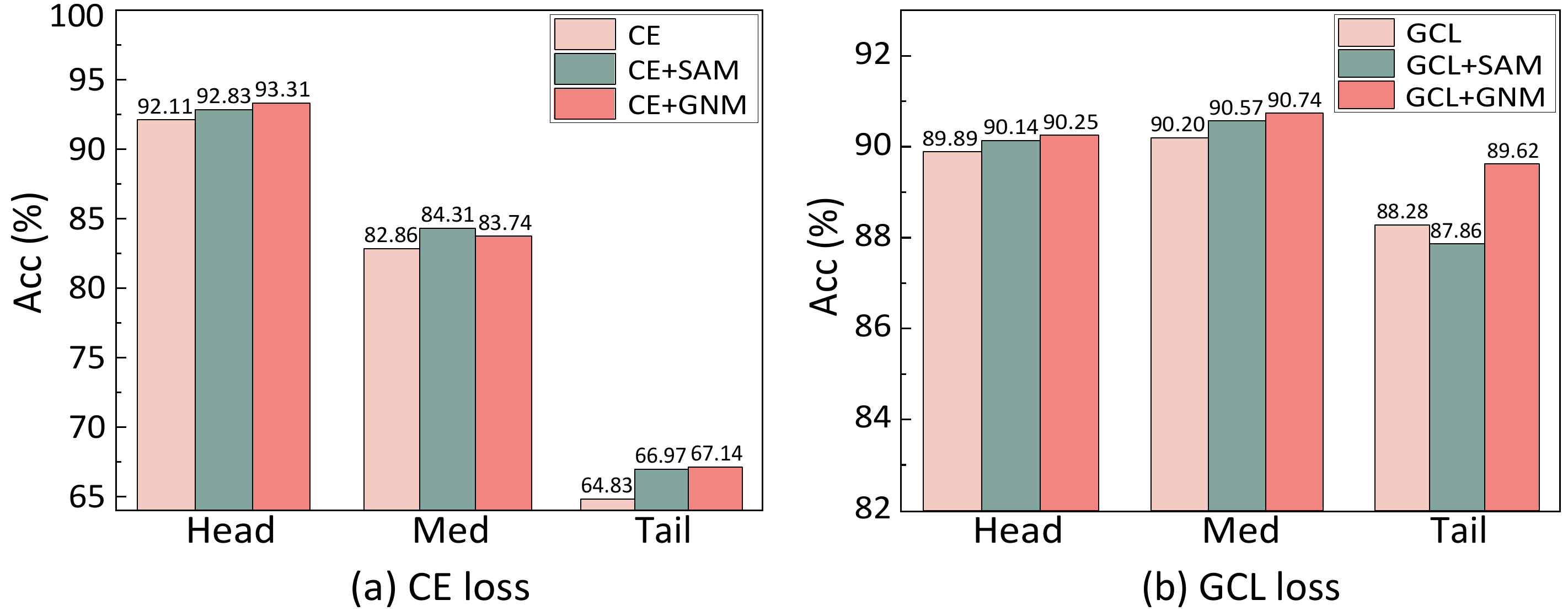} 
    \figcaption{Effectiveness comparison of different classes.} 
    \label{fig:histogram}
\end{figure*}

\begin{figure*}[htpb]
\centering
\includegraphics[width=0.85\textwidth, height=0.265\textwidth]{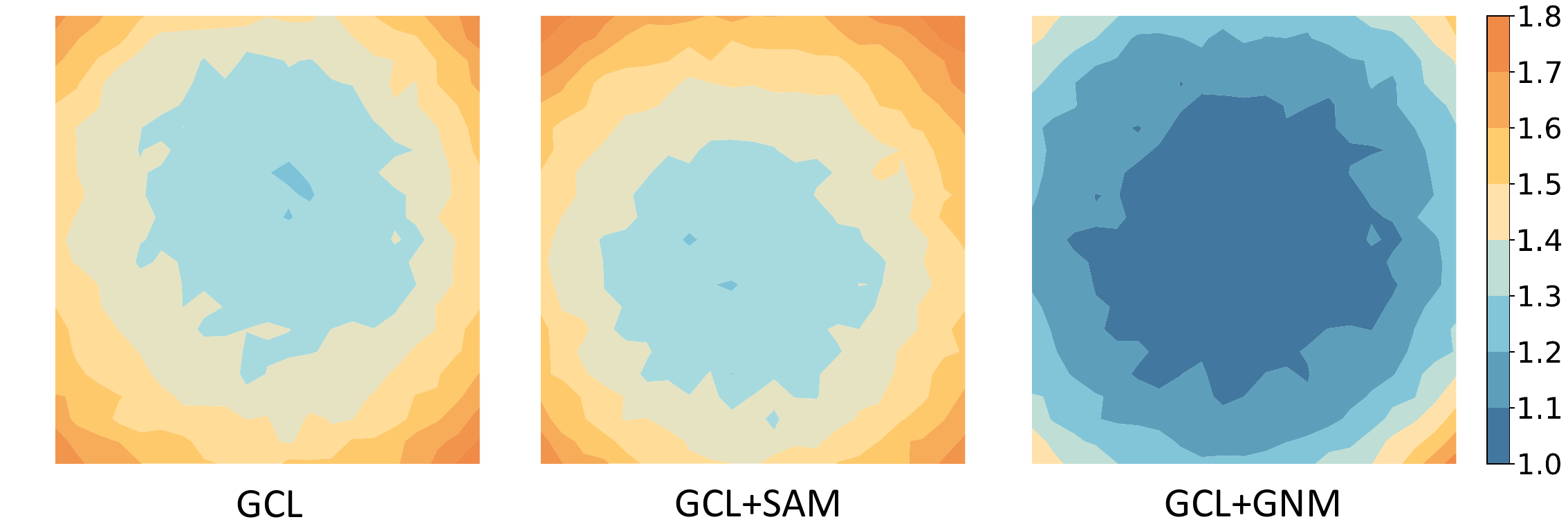} 
\figcaption{GCL loss landscapes based on ViT-B/16 (best view in color).}
\label{fig:com_gcl}
\end{figure*}

\noindent \textbf{Visualization of Loss Landscape.}
We employ the method in \cite{Li18Visualizing} to visualize the loss landscape of the model. 
\Cref{fig:com_gcl} shows the results of the learnable prompts obtained by different optimizers.
The absence of a perceptible change in flatness explains the marginal improvement of GCL+DRW+SAM over GCL+DRW in \Cref{tab:com_sam}.
In comparison, GNM, by inducing a flattened loss landscape, further accentuates the improvement over GCL+DRW.
An unforeseen advantage is that GNM results in a smaller loss, indicating that GNM enhances the model fitting to the training data.
A flatter landscape with lower minima contributes to discovering a more optimal solution.
By referring to \Cref{fig:intro_ce} in \Cref{sec:intro}, it can be observed that GNM is effective across various loss functions.

\section{Concluding Remarks}
\label{sec:con}

In this paper, we observed that class biases persist even when employing large-scale pre-trained models such as VPT in long-tail learning.
While SAM can enhance the generalization performance of the VPT model on long-tailed data, it still exhibits several shortcomings: neglecting higher-order terms leads to a suboptimal perturbation vector, additional forward and backward passes double the computational time, and the generalization is relatively affected by gradients predominantly originating from head classes.
Based on this, we have proposed GNM-PT, which involves fine-tuning pre-trained models using the innovative Gaussian Neighborhood Minimization (GNM) optimizer. 
GNM leverages random noise as a substitute for gradients in the initial step of SAM. 
The proposed GNM not only balances the generalization capabilities of both head and tail classes but also reduces computational time.
To fully leverage model information, enhance classifier robustness, and enable end-to-end training, we additionally employ merging prompt strategy.
We have conducted extensive comparative experiments and ablation studies to demonstrate the effectiveness of the proposed method and the individual component.

While GNM-PT proves effective, it is not exempt from limitations.
\cref{tab:inat} and \cref{tab:pla} show that we still need to further re-balance the classifier.
However, the re-balanced strategy adopted compromises performance in head classes to enhance overall performance.
Our further research will focus on a more effective optimization strategy that simultaneously improves feature representation and classifier performance, while also enhancing the generalization ability across all classes.

\section*{Acknowledgement}
We thank the reviewers for their valuable comments.
This work was supported in parts by NSFC (62306181, U21B2023, 62476063, 62431004), Guangdong Basic and Applied Basic Research Foundation (2024A1515010163, 2023B1515120026), Shenzhen Science and Technology Program (RCBS20231211090659101, KQTD20210811090044003, RCJC20200714114435012), NSFC/RGC (N\_HKBU214/21), RGC GRF (12201323), and Research Funds from Shenzhen University.

\bibliography{main}

\begin{thebibliography}{72}
\providecommand{\natexlab}[1]{#1}
\providecommand{\url}[1]{\texttt{#1}}
\expandafter\ifx\csname urlstyle\endcsname\relax
  \providecommand{\doi}[1]{doi: #1}\else
  \providecommand{\doi}{doi: \begingroup \urlstyle{rm}\Url}\fi

\bibitem[Cai et~al.(2021)Cai, Wang, and Hwang]{Cai2021ACE}
Jiarui Cai, Yizhou Wang, and Jenq-Neng Hwang.
\newblock Ace: Ally complementary experts for solving long-tailed recognition in one-shot.
\newblock In \emph{ICCV}, pages 112--121, 2021.

\bibitem[Cao et~al.(2019)Cao, Wei, Gaidon, Arechiga, and Ma]{Kaidi2019}
Kaidi Cao, Colin Wei, Adrien Gaidon, Nikos Arechiga, and Tengyu Ma.
\newblock Learning imbalanced datasets with label-distribution-aware margin loss.
\newblock In \emph{NeurIPS}, pages 1567--1578, 2019.

\bibitem[Chawla et~al.(2002)Chawla, Bowyer, Hall, and Kegelmeyer]{Nitesh2002SMOTE}
Nitesh~V. Chawla, Kevin~W. Bowyer, Lawrence~O. Hall, and W.~Philip Kegelmeyer.
\newblock {SMOTE:} synthetic minority over-sampling technique.
\newblock \emph{JAIR}, 16:\penalty0 321--357, 2002.

\bibitem[Chen et~al.(2022{\natexlab{a}})Chen, Ge, Tong, Wang, Song, Wang, and Luo]{chen2022adaptformer}
Shoufa Chen, Chongjian Ge, Zhan Tong, Jiangliu Wang, Yibing Song, Jue Wang, and Ping Luo.
\newblock Adaptformer: Adapting vision transformers for scalable visual recognition.
\newblock \emph{NeurIPS}, 35:\penalty0 16664--16678, 2022{\natexlab{a}}.

\bibitem[Chen et~al.(2022{\natexlab{b}})Chen, Hsieh, and Gong]{ChenHG22When}
Xiangning Chen, Cho{-}Jui Hsieh, and Boqing Gong.
\newblock When vision transformers outperform resnets without pre-training or strong data augmentations.
\newblock In \emph{ICLR}, 2022{\natexlab{b}}.

\bibitem[Cui et~al.(2023{\natexlab{a}})Cui, Liu, Tian, Zhong, and Jia]{Cui2022reslt}
Jiequan Cui, Shu Liu, Zhuotao Tian, Zhisheng Zhong, and Jiaya Jia.
\newblock Reslt: Residual learning for long-tailed recognition.
\newblock \emph{IEEE TPAMI}, 45\penalty0 (3):\penalty0 3695--3706, 2023{\natexlab{a}}.

\bibitem[Cui et~al.(2023{\natexlab{b}})Cui, Zhong, Tian, Liu, Yu, and Jia]{cui2023Gpaco}
Jiequan Cui, Zhisheng Zhong, Zhuotao Tian, Shu Liu, Bei Yu, and Jiaya Jia.
\newblock Generalized parametric contrastive learning.
\newblock \emph{IEEE TPAMI}, pages 1--12, 2023{\natexlab{b}}.

\bibitem[Cui et~al.(2019)Cui, Jia, Lin, Song, and Belongie]{cui2019class}
Yin Cui, Menglin Jia, Tsung-Yi Lin, Yang Song, and Serge Belongie.
\newblock Class-balanced loss based on effective number of samples.
\newblock In \emph{CVPR}, pages 9268--9277, 2019.

\bibitem[Deng et~al.(2009)Deng, Dong, Socher, Li, Li, and Li]{Deng_2009_CVPR}
Jia Deng, Wei Dong, Richard Socher, Li{-}Jia Li, Kai Li, and Fei{-}Fei Li.
\newblock Imagenet: {A} large-scale hierarchical image database.
\newblock In \emph{CVPR}, pages 248--255, 2009.

\bibitem[Dong et~al.(2023)Dong, Zhou, Shuicheng, and Wangmeng]{Dong23LPT}
Bowen Dong, Pan Zhou, Yan Shuicheng, and Zuo Wangmeng.
\newblock {LPT:} long-tailed prompt tuning for image classification.
\newblock In \emph{ICLR}, 2023.

\bibitem[Dosovitskiy et~al.(2021)Dosovitskiy, Beyer, Kolesnikov, Weissenborn, Zhai, Unterthiner, Dehghani, Minderer, Heigold, Gelly, Uszkoreit, and Houlsby]{Dosovitskiy21vit}
Alexey Dosovitskiy, Lucas Beyer, Alexander Kolesnikov, Dirk Weissenborn, Xiaohua Zhai, Thomas Unterthiner, Mostafa Dehghani, Matthias Minderer, Georg Heigold, Sylvain Gelly, Jakob Uszkoreit, and Neil Houlsby.
\newblock An image is worth 16x16 words: Transformers for image recognition at scale.
\newblock In \emph{ICLR}, 2021.

\bibitem[Dziugaite and M.~Roy(2017)]{DziugaiteR17Computing}
Gintare~Karolina Dziugaite and Daniel M.~Roy.
\newblock Computing nonvacuous generalization bounds for deep (stochastic) neural networks with many more parameters than training data.
\newblock In \emph{{UAI}}, 2017.

\bibitem[Elkan(2001)]{elkan2001foundations}
Charles Elkan.
\newblock The foundations of cost-sensitive learning.
\newblock In \emph{IJCAI}, pages 973--978, 2001.

\bibitem[Foret et~al.(2021)Foret, Kleiner, Mobahi, and Neyshabur]{Foret21Sharpness}
Pierre Foret, Ariel Kleiner, Hossein Mobahi, and Behnam Neyshabur.
\newblock Sharpness-aware minimization for efficiently improving generalization.
\newblock In \emph{ICLR}, 2021.

\bibitem[Han et~al.(2023)Han, Wang, Cui, Cao, Wang, Qi, and Liu]{han20232vpt}
Cheng Han, Qifan Wang, Yiming Cui, Zhiwen Cao, Wenguan Wang, Siyuan Qi, and Dongfang Liu.
\newblock E\({}^{\mbox{2}}\){VPT}: an effective and efficient approach for visual prompt tuning.
\newblock In \emph{ICCV}, pages 17445--17456, 2023.

\bibitem[Han et~al.(2024)Han, Wang, Cui, Wang, Huang, Qi, and Liu]{han2024facing}
Cheng Han, Qifan Wang, Yiming Cui, Wenguan Wang, Lifu Huang, Siyuan Qi, and Dongfang Liu.
\newblock Facing the elephant in the room: Visual prompt tuning or full finetuning?
\newblock In \emph{ICLR}, 2024.

\bibitem[He et~al.(2008)He, Bai, Garcia, and Li]{he2008adasyn}
Haibo He, Yang Bai, Edwardo~A Garcia, and Shutao Li.
\newblock Adasyn: Adaptive synthetic sampling approach for imbalanced learning.
\newblock In \emph{IJCNN}, pages 1322--1328, 2008.

\bibitem[Hochreiter and Schmidhuber(1994)]{hochreiter1994simplifying}
Sepp Hochreiter and J{\"u}rgen Schmidhuber.
\newblock Simplifying neural nets by discovering flat minima.
\newblock \emph{NeurIPS}, 7:\penalty0 529--536, 1994.

\bibitem[Hsieh et~al.(2021)Hsieh, Robb, Chen, and Huang]{Hsieh21DropLoss}
Ting{-}I Hsieh, Esther Robb, Hwann{-}Tzong Chen, and Jia{-}Bin Huang.
\newblock Droploss for long-tail instance segmentation.
\newblock In \emph{AAAI}, pages 1549--1557, 2021.

\bibitem[Hu et~al.(2022)Hu, Shen, Wallis, Allen-Zhu, Li, Wang, Wang, and Chen]{hu2021lora}
Edward~J Hu, Yelong Shen, Phillip Wallis, Zeyuan Allen-Zhu, Yuanzhi Li, Shean Wang, Lu Wang, and Weizhu Chen.
\newblock Lora: Low-rank adaptation of large language models.
\newblock In \emph{ICLR}, 2022.

\bibitem[Jia et~al.(2022)Jia, Tang, Chen, Cardie, Belongie, Hariharan, and Lim]{jia2022visual}
Menglin Jia, Luming Tang, Bor-Chun Chen, Claire Cardie, Serge Belongie, Bharath Hariharan, and Ser-Nam Lim.
\newblock Visual prompt tuning.
\newblock In \emph{ECCV}, pages 709--727. Springer, 2022.

\bibitem[Jin et~al.(2023)Jin, Li, Lu, Cheung, and Wang]{Jin2023shike}
Yan Jin, Mengke Li, Yang Lu, Yiu-ming Cheung, and Hanzi Wang.
\newblock Long-tailed visual recognition via self-heterogeneous integration with knowledge excavation.
\newblock In \emph{CVPR}, pages 23695--23704, 2023.

\bibitem[Kang et~al.(2020)Kang, Xie, Rohrbach, Yan, Gordo, Feng, and Kalantidis]{decouple20}
Bingyi Kang, Saining Xie, Marcus Rohrbach, Zhicheng Yan, Albert Gordo, Jiashi Feng, and Yannis Kalantidis.
\newblock Decoupling representation and classifier for long-tailed recognition.
\newblock In \emph{ICLR}, 2020.

\bibitem[Khan et~al.(2018)Khan, Hayat, Bennamoun, Sohel, and Togneri]{Salman2018Cost}
Salman~H. Khan, Munawar Hayat, Mohammed Bennamoun, Ferdous~Ahmed Sohel, and Roberto Togneri.
\newblock Cost-sensitive learning of deep feature representations from imbalanced data.
\newblock \emph{IEEE TNNLS}, 29\penalty0 (8):\penalty0 3573--3587, 2018.

\bibitem[Krizhevsky et~al.(2009)Krizhevsky, Hinton, et~al.]{krizhevsky2009learning}
Alex Krizhevsky, Geoffrey Hinton, et~al.
\newblock Learning multiple layers of features from tiny images.
\newblock \emph{Tech Report}, 2009.

\bibitem[Kwon et~al.(2021)Kwon, Kim, Park, and Choi]{KwonKPC21ASAM}
Jungmin Kwon, Jeongseop Kim, Hyunseo Park, and In~Kwon Choi.
\newblock {ASAM:} adaptive sharpness-aware minimization for scale-invariant learning of deep neural networks.
\newblock In \emph{ICML}, pages 5905--5914, 2021.

\bibitem[Li et~al.(2022{\natexlab{a}})Li, Han, Li, Fu, and Zhang]{LiBL2022Trustworthy}
Bolian Li, Zongbo Han, Haining Li, Huazhu Fu, and Changqing Zhang.
\newblock Trustworthy long-tailed classification.
\newblock In \emph{CVPR}, pages 6970--6979, 2022{\natexlab{a}}.

\bibitem[Li et~al.(2018)Li, Xu, Taylor, Studer, and Goldstein]{Li18Visualizing}
Hao Li, Zheng Xu, Gavin Taylor, Christoph Studer, and Tom Goldstein.
\newblock Visualizing the loss landscape of neural nets.
\newblock \emph{NeurIPS}, pages 6391--6401, 2018.

\bibitem[Li et~al.(2022{\natexlab{b}})Li, Tan, Wan, Lei, and Guo]{liJ2022nested}
Jun Li, Zichang Tan, Jun Wan, Zhen Lei, and Guodong Guo.
\newblock Nested collaborative learning for long-tailed visual recognition.
\newblock In \emph{CVPR}, pages 6949--6958, 2022{\natexlab{b}}.

\bibitem[Li(2022)]{li2022advances}
Mengke Li.
\newblock \emph{Advances in Long-Tailed Visual Recognition}.
\newblock PhD thesis, Hong Kong Baptist University, 2022.

\bibitem[Li et~al.(2022{\natexlab{c}})Li, Cheung, and Jiang]{LiMK2022FBL}
Mengke Li, Yiu-Ming Cheung, and Juyong Jiang.
\newblock Feature-balanced loss for long-tailed visual recognition.
\newblock In \emph{ICME}, pages 1--6, 2022{\natexlab{c}}.

\bibitem[Li et~al.(2022{\natexlab{d}})Li, Cheung, and Lu]{LiMK2022GCL}
Mengke Li, Yiu-ming Cheung, and Yang Lu.
\newblock Long-tailed visual recognition via gaussian clouded logit adjustment.
\newblock In \emph{CVPR}, pages 6929--6938, 2022{\natexlab{d}}.

\bibitem[Li et~al.(2023)Li, Cheung, and Hu]{LiMK2022KPS}
Mengke Li, Yiu-ming Cheung, and Zhikai Hu.
\newblock Key point sensitive loss for long-tailed visual recognition.
\newblock \emph{IEEE TPAMI}, 45\penalty0 (4):\penalty0 4812--4825, 2023.

\bibitem[Li et~al.(2020)Li, Wang, Kang, Tang, Wang, Li, and Feng]{LiY2020Overcoming}
Yu Li, Tao Wang, Bingyi Kang, Sheng Tang, Chunfeng Wang, Jintao Li, and Jiashi Feng.
\newblock Overcoming classifier imbalance for long-tail object detection with balanced group softmax.
\newblock In \emph{CVPR}, pages 10988--10997, 2020.

\bibitem[Liu et~al.(2022{\natexlab{a}})Liu, Li, and Sun]{Liu2022Memory}
Jialun Liu, Wenhui Li, and Yifan Sun.
\newblock Memory-based jitter: Improving visual recognition on long-tailed data with diversity in memory.
\newblock In \emph{AAAI}, pages 1720--1728. {AAAI} Press, 2022{\natexlab{a}}.

\bibitem[Liu et~al.(2022{\natexlab{b}})Liu, Mai, Cheng, Chen, Hsieh, and You]{Liu2022R-SAM}
Yong Liu, Siqi Mai, Minhao Cheng, Xiangning Chen, Cho-Jui Hsieh, and Yang You.
\newblock Random sharpness-aware minimization.
\newblock \emph{NeurIPS}, 35:\penalty0 24543--24556, 2022{\natexlab{b}}.

\bibitem[Liu et~al.(2019)Liu, Miao, Zhan, Wang, Gong, and Yu]{LiuZW19LTOW}
Ziwei Liu, Zhongqi Miao, Xiaohang Zhan, Jiayun Wang, Boqing Gong, and Stella~X. Yu.
\newblock Large-scale long-tailed recognition in an open world.
\newblock In \emph{CVPR}, pages 2537--2546, 2019.

\bibitem[Long et~al.(2022)Long, Yin, Ajanthan, Nguyen, Purkait, Garg, Blair, Shen, and van~den Hengel]{Long2022RAC}
Alexander Long, Wei Yin, Thalaiyasingam Ajanthan, Vu Nguyen, Pulak Purkait, Ravi Garg, Alan Blair, Chunhua Shen, and Anton van~den Hengel.
\newblock Retrieval augmented classification for long-tail visual recognition.
\newblock In \emph{CVPR}, pages 6959--6969, 2022.

\bibitem[Ma et~al.(2021)Ma, Geng, Wang, Shao, Lu, Li, Gao, and Qiao]{ma2021simple}
Teli Ma, Shijie Geng, Mengmeng Wang, Jing Shao, Jiasen Lu, Hongsheng Li, Peng Gao, and Yu Qiao.
\newblock A simple long-tailed recognition baseline via vision-language model.
\newblock \emph{arXiv preprint arXiv:2111.14745}, 2021.

\bibitem[Mahajan et~al.(2018)Mahajan, Girshick, Ramanathan, He, Paluri, Li, Bharambe, and van~der Maaten]{Mahajan2018ECCV}
Dhruv Mahajan, Ross Girshick, Vignesh Ramanathan, Kaiming He, Manohar Paluri, Yixuan Li, Ashwin Bharambe, and Laurens van~der Maaten.
\newblock Exploring the limits of weakly supervised pretraining.
\newblock In \emph{ECCV}, 2018.

\bibitem[McAllester(1999)]{mcallester1999pac}
David~A McAllester.
\newblock {PAC}-bayesian model averaging.
\newblock In \emph{COLT}, pages 164--170, 1999.

\bibitem[Menon et~al.(2021)Menon, Jayasumana, Rawat, Jain, Veit, and Kumar]{adjustment21}
Aditya~Krishna Menon, Sadeep Jayasumana, Ankit~Singh Rawat, Himanshu Jain, Andreas Veit, and Sanjiv Kumar.
\newblock Long-tail learning via logit adjustment.
\newblock In \emph{ICLR}, 2021.

\bibitem[Mi et~al.(2022)Mi, Shen, Ren, Zhou, Sun, Ji, and Tao]{mi2022make}
Peng Mi, Li Shen, Tianhe Ren, Yiyi Zhou, Xiaoshuai Sun, Rongrong Ji, and Dacheng Tao.
\newblock Make sharpness-aware minimization stronger: A sparsified perturbation approach.
\newblock \emph{NeurIPS}, 35:\penalty0 30950--30962, 2022.

\bibitem[Park et~al.(2021)Park, Lim, Jeon, and Choi]{park2021influence}
Seulki Park, Jongin Lim, Younghan Jeon, and Jin~Young Choi.
\newblock Influence-balanced loss for imbalanced visual classification.
\newblock In \emph{ICCV}, pages 735--744, 2021.

\bibitem[Park et~al.(2022)Park, Hong, Heo, Yun, and Choi]{ParkS2022Majority}
Seulki Park, Youngkyu Hong, Byeongho Heo, Sangdoo Yun, and Jin~Young Choi.
\newblock The majority can help the minority: Context-rich minority oversampling for long-tailed classification.
\newblock In \emph{CVPR}, pages 6887--6896, 2022.

\bibitem[Radford et~al.(2021)Radford, Kim, Hallacy, Ramesh, Goh, Agarwal, Sastry, Askell, Mishkin, Clark, et~al.]{radford2021learning}
Alec Radford, Jong~Wook Kim, Chris Hallacy, Aditya Ramesh, Gabriel Goh, Sandhini Agarwal, Girish Sastry, Amanda Askell, Pamela Mishkin, Jack Clark, et~al.
\newblock Learning transferable visual models from natural language supervision.
\newblock In \emph{ICML}, pages 8748--8763, 2021.

\bibitem[Rangwani et~al.(2022)Rangwani, Aithal, Mishra, et~al.]{rangwani2022escaping}
Harsh Rangwani, Sumukh~K Aithal, Mayank Mishra, et~al.
\newblock Escaping saddle points for effective generalization on class-imbalanced data.
\newblock \emph{NeurIPS}, 35:\penalty0 22791--22805, 2022.

\bibitem[Ren et~al.(2020)Ren, Yu, sheng, Ma, Zhao, Yi, and Li]{RenJW2020Balanced}
Jiawei Ren, Cunjun Yu, shunan sheng, Xiao Ma, Haiyu Zhao, Shuai Yi, and hongsheng Li.
\newblock Balanced meta-softmax for long-tailed visual recognition.
\newblock In \emph{NeurIPS}, pages 4175--4186, 2020.

\bibitem[Ren et~al.(2018)Ren, Zeng, Yang, and Urtasun]{Mengye2018Learning}
Mengye Ren, Wenyuan Zeng, Bin Yang, and Raquel Urtasun.
\newblock Learning to reweight examples for robust deep learning.
\newblock In \emph{ICML}, pages 4331--4340, 2018.

\bibitem[Shi et~al.(2023)Shi, Wei, Zhou, Shao, Han, and Li]{shi2023parameter}
Jiang-Xin Shi, Tong Wei, Zhi Zhou, Jie-Jing Shao, Xin-Yan Han, and Yu-Feng Li.
\newblock Long-tail learning with foundation model: Heavy fine-tuning hurts.
\newblock In \emph{ICML}, 2023.

\bibitem[Suh and Seo(2023)]{SuhS23GML}
Min{-}Kook Suh and Seung{-}Woo Seo.
\newblock Long-tailed recognition by mutual information maximization between latent features and ground-truth labels.
\newblock In \emph{ICML}, pages 32770--32782, 2023.

\bibitem[Tian et~al.(2022)Tian, Wang, Zhu, Dai, and Qiao]{tian2022vl}
Changyao Tian, Wenhai Wang, Xizhou Zhu, Jifeng Dai, and Yu Qiao.
\newblock Vl-ltr: Learning class-wise visual-linguistic representation for long-tailed visual recognition.
\newblock In \emph{ECCV}, pages 73--91, 2022.

\bibitem[Van~Horn et~al.(2018)Van~Horn, Mac~Aodha, Song, Cui, Sun, Shepard, Adam, Perona, and Belongie]{Horn_2018_CVPR}
Grant Van~Horn, Oisin Mac~Aodha, Yang Song, Yin Cui, Chen Sun, Alex Shepard, Hartwig Adam, Pietro Perona, and Serge~J. Belongie.
\newblock The inaturalist species classification and detection dataset.
\newblock In \emph{CVPR}, pages 8769--8778, 2018.

\bibitem[Vaswani et~al.(2017)Vaswani, Shazeer, Parmar, Uszkoreit, Jones, Gomez, Kaiser, and Polosukhin]{vaswani2017attention}
Ashish Vaswani, Noam Shazeer, Niki Parmar, Jakob Uszkoreit, Llion Jones, Aidan~N Gomez, {\L}ukasz Kaiser, and Illia Polosukhin.
\newblock Attention is all you need.
\newblock \emph{NeurIPS}, 30, 2017.

\bibitem[Wang et~al.(2021{\natexlab{a}})Wang, Zhang, Zang, Cao, Pang, Gong, Chen, Liu, Loy, and Lin]{Wang2021Seesaw}
Jiaqi Wang, Wenwei Zhang, Yuhang Zang, Yuhang Cao, Jiangmiao Pang, Tao Gong, Kai Chen, Ziwei Liu, Chen~Change Loy, and Dahua Lin.
\newblock Seesaw loss for long-tailed instance segmentation.
\newblock In \emph{CVPR}, pages 9695--9704, 2021{\natexlab{a}}.

\bibitem[Wang et~al.(2020)Wang, Li, Kang, Li, Liew, Tang, Hoi, and Feng]{wang2020devil}
Tao Wang, Yu Li, Bingyi Kang, Junnan Li, Junhao Liew, Sheng Tang, Steven C.~H. Hoi, and Jiashi Feng.
\newblock The devil is in classification: A simple framework for long-tail instance segmentation.
\newblock In \emph{ECCV}, pages 728--744, 2020.

\bibitem[Wang et~al.(2021{\natexlab{b}})Wang, Lian, Miao, Liu, and Yu]{WangXD21RIDE}
Xudong Wang, Long Lian, Zhongqi Miao, Ziwei Liu, and Stella~X. Yu.
\newblock Long-tailed recognition by routing diverse distribution-aware experts.
\newblock In \emph{ICLR}, 2021{\natexlab{b}}.

\bibitem[Wang et~al.(2017)Wang, Ramanan, and Hebert]{Wang2017Learning}
Yu{-}Xiong Wang, Deva Ramanan, and Martial Hebert.
\newblock Learning to model the tail.
\newblock In \emph{NeurIPS}, pages 7029--7039, 2017.

\bibitem[Wang et~al.(2019)Wang, Gan, Yang, Wu, and Yan]{wang2019dynamic}
Yiru Wang, Weihao Gan, Jie Yang, Wei Wu, and Junjie Yan.
\newblock Dynamic curriculum learning for imbalanced data classification.
\newblock In \emph{CVPR}, pages 5017--5026, 2019.

\bibitem[Wang et~al.(2023)Wang, Yu, Wang, Heng, Chen, Ye, Xie, Xie, and Zhang]{wang2023exploring}
Yidong Wang, Zhuohao Yu, Jindong Wang, Qiang Heng, Hao Chen, Wei Ye, Rui Xie, Xing Xie, and Shikun Zhang.
\newblock Exploring vision-language models for imbalanced learning.
\newblock \emph{arXiv preprint arXiv:2304.01457}, 2023.

\bibitem[Xiang et~al.(2020)Xiang, Ding, and Han]{XiangLY2020LFME}
Liuyu Xiang, Guiguang Ding, and Jungong Han.
\newblock Learning from multiple experts: Self-paced knowledge distillation for long-tailed classification.
\newblock In \emph{ECCV}, pages 247--263, 2020.

\bibitem[Xu et~al.(2023)Xu, Liu, Yang, Chai, and Yuan]{Xu2023Learning}
Zhengzhuo Xu, Ruikang Liu, Shuo Yang, Zenghao Chai, and Chun Yuan.
\newblock Learning imbalanced data with vision transformers.
\newblock In \emph{CVPR}, pages 15793--15803, 2023.

\bibitem[Yoo et~al.(2023)Yoo, Kim, Jung, Lee, and Yoon]{YooKJLY23Improving}
Seungryong Yoo, Eunji Kim, Dahuin Jung, Jungbeom Lee, and Sungroh Yoon.
\newblock Improving visual prompt tuning for self-supervised vision transformers.
\newblock In \emph{ICML}, pages 40075--40092, 2023.

\bibitem[Zang et~al.(2021)Zang, Huang, and Loy]{Zang2021fasa}
Yuhang Zang, Chen Huang, and Chen~Change Loy.
\newblock Fasa: Feature augmentation and sampling adaptation for long-tailed instance segmentation.
\newblock In \emph{ICCV}, 2021.

\bibitem[Zhang et~al.(2021)Zhang, Wei, Zhou, and Wu]{zhang2021bag}
Yongshun Zhang, Xiu-Shen Wei, Boyan Zhou, and Jianxin Wu.
\newblock Bag of tricks for long-tailed visual recognition with deep convolutional neural networks.
\newblock In \emph{AAAI}, pages 3447--3455, 2021.

\bibitem[Zhong et~al.(2021)Zhong, Cui, Liu, and Jia]{mislas21}
Zhisheng Zhong, Jiequan Cui, Shu Liu, and Jiaya Jia.
\newblock Improving calibration for long-tailed recognition.
\newblock In \emph{CVPR}, pages 16489--16498, 2021.

\bibitem[Zhou et~al.(2017)Zhou, Lapedriza, Khosla, Oliva, and Torralba]{zhou2017places}
Bolei Zhou, Agata Lapedriza, Aditya Khosla, Aude Oliva, and Antonio Torralba.
\newblock Places: A 10 million image database for scene recognition.
\newblock \emph{IEEE TPAMI}, 40\penalty0 (6):\penalty0 1452--1464, 2017.

\bibitem[Zhou et~al.(2020)Zhou, Cui, Wei, and Chen]{bbn20}
Boyan Zhou, Quan Cui, Xiu-Shen Wei, and Zhao-Min Chen.
\newblock {BBN}: Bilateral-branch network with cumulative learning for long-tailed visual recognition.
\newblock In \emph{CVPR}, pages 9719--9728, 2020.

\bibitem[Zhou et~al.(2022)Zhou, Yang, Loy, and Liu]{zhou2022learning}
Kaiyang Zhou, Jingkang Yang, Chen~Change Loy, and Ziwei Liu.
\newblock Learning to prompt for vision-language models.
\newblock \emph{IJCV}, 130\penalty0 (9):\penalty0 2337--2348, 2022.

\bibitem[Zhou et~al.(2023{\natexlab{a}})Zhou, Qu, Xu, and Shen]{zhou2023imbsam}
Yixuan Zhou, Yi Qu, Xing Xu, and Hengtao Shen.
\newblock Imbsam: A closer look at sharpness-aware minimization in class-imbalanced recognition.
\newblock In \emph{ICCV}, pages 11345--11355, 2023{\natexlab{a}}.

\bibitem[Zhou et~al.(2023{\natexlab{b}})Zhou, Li, Zhao, Heng, and Gong]{zhou2023class}
Zhipeng Zhou, Lanqing Li, Peilin Zhao, Pheng-Ann Heng, and Wei Gong.
\newblock Class-conditional sharpness-aware minimization for deep long-tailed recognition.
\newblock In \emph{CVPR}, pages 3499--3509, 2023{\natexlab{b}}.

\bibitem[Zhu et~al.(2022)Zhu, Wang, Chen, Chen, and Jiang]{Zhu2022CVPR}
Jianggang Zhu, Zheng Wang, Jingjing Chen, Yi-Ping~Phoebe Chen, and Yu-Gang Jiang.
\newblock Balanced contrastive learning for long-tailed visual recognition.
\newblock In \emph{CVPR}, pages 6908--6917, 2022.

\end{thebibliography}
\bibliographystyle{ieeenat_fullname}

\clearpage
\newpage
\appendix
\section*{Appendix}

\section{Gradient Computation for GNM}
\label{sec:grad}
We replace the expectation computation in $L_\mathcal{T}^{GN}$ with random sampling during training.
For a single gradient computation, we have:
\begin{align}
\nabla_{\boldsymbol{\theta}}L_\mathcal{T}^{GN}(\boldsymbol{\theta}) & \approx \nabla_{\boldsymbol{\theta}}L_\mathcal{T}(\boldsymbol{\theta}+\tilde{\boldsymbol{\varepsilon}}) \\
& = \dfrac{d(\boldsymbol{\theta}+\tilde{\boldsymbol{\varepsilon}})} {d\boldsymbol{\theta}}  \nabla_{\boldsymbol{\theta}}L_\mathcal{T}(\boldsymbol{\theta}) |_{\boldsymbol{\theta}+\tilde{\boldsymbol{\varepsilon}}} 
 = \nabla_{\boldsymbol{\theta}}L_\mathcal{T}(\boldsymbol{\theta}) |_{\boldsymbol{\theta}+\tilde{\boldsymbol{\varepsilon}}}.
\end{align}
For simplicity, we omit the subscript indicating the training epoch $t$.

\section{Detail Proof of \cref{remark:head-bias}}
\label{sec:remark1}
If $\mathcal{T}$ is a long-tail training set i.i.d. sampled from $\mathcal{D}$, the direction of gradients in existing methods such as SGD and SAM is predominantly influenced by head classes. 
For the most widely adopted SGD, a lot of previous works, such as  \citet{Wang2021Seesaw, Hsieh21DropLoss, LiMK2022FBL} have theoretically and empirically demonstrated that the gradient for head classes far exceeds that of tail classes.
Therefore, the optimization is dominated by head classes.
Here, we provide proof establishing that the perturbations within SAM are dominated by head classes.
\begin{proof}
For SAM, we analyze the perturbations class by class.
Through \cref{eq:sam_update}, we have:
\begin{align}\label{eq:epsilon}
    \boldsymbol{\varepsilon} \leftarrow \hat{\boldsymbol{\varepsilon}} 
    & =  \rho \dfrac{\nabla_{\boldsymbol{\theta}}L_{\mathcal{T}^{\text{Tail} }}(\boldsymbol{\theta}) + \nabla_{\boldsymbol{\theta}}L_{\mathcal{T}^{\text{Head} }}(\boldsymbol{\theta}) } {\|\nabla_{\boldsymbol{\theta}}L_{\mathcal{T}}(\boldsymbol{\theta})\|^2_2}.  
\end{align}
The $\rho \frac{1}{\|\nabla_{\boldsymbol{\theta}}L_{\mathcal{T}}(\boldsymbol{\theta})\|^2_2}$ can be considered to be the same value $V$ for both the head and tail classes.
\citet{Wang2021Seesaw, LiMK2022FBL} have theoretically and empirically shown that the gradient for head classes far exceeds that of tail classes, that is, $\nabla_{\boldsymbol{\theta}}L_{\mathcal{T}^{\text{Tail} }}  \gg  \nabla_{\boldsymbol{\theta}}L_{\mathcal{T}^{\text{Tail} }}$.
Therefore, the perturbation obtained based on tail classes can be ignored:
\begin{align}
    \boldsymbol{\varepsilon} \approx  \rho \dfrac{\nabla_{\boldsymbol{\theta}}L_{\mathcal{T}^{\text{Head} }}(\boldsymbol{\theta})} {\|\nabla_{\boldsymbol{\theta}}L_{\mathcal{T}}(\boldsymbol{\theta})\|^2_2} = V \cdot \nabla_{\boldsymbol{\theta}}L_{\mathcal{T}^{\text{Head} }}(\boldsymbol{\theta}).
\end{align}  
Consequently, SAM optimization tends to prioritize generalization for head classes.
\end{proof}

In contrast, the perturbations obtained by GNM are:
\begin{equation}
\label{eq:gnm_pert}    
\boldsymbol{\varepsilon} \leftarrow  \tilde{\boldsymbol{\varepsilon}}_t = \left[ \varepsilon_i \right]_{i=1}^k, \;  \varepsilon_i \sim \mathcal{N}(0, \sigma^2) 
\end{equation}
This perturbation remains unaffected by the input samples and their quantities.

\section{Robust Training strategy for Prompt Tuning.}
In \cref{eq:vpt}, only $\mathbf{z}_{cls}^{l}$ is fed into the linear classifier for classification.
However, all the learnable prompts are trained on the fine-tuning dataset and thus contain newly learned information.
As the deep block output incorporates global attention~\cite{Dosovitskiy21vit}, we propose merging prompt (MP) that merges the last prompt with $\mathbf{z}^L_{cls}$ (assuming that we have $L$ blocks). 
Subsequently, we utilize this merged token as the ultimate class token:
\begin{equation}
\label{eq:cls_token}
    \hat{\mathbf{z}}_{cls}=\texttt{Merge}\left(\left[ w_p \cdot \mathbf{p}^{L-1}, w_{z} \cdot \mathbf{z}^L_{cls}\right] \right),
\end{equation}
where $w_p$ and $w_{z}$ are hyper-parameters used to control the merging ratio.
$\hat{z}_{cls}$ is eventually fed into the linear classifier for classification. 
We use this Merge Prompt technique in our experiments.

\section{Algorithm for GNM-PT}
\label{sec:alg}
The training procedure for GNM-PT is outlined in Algorithm~\ref{alg:GNM-PT}.

\begin{algorithm}
    \caption{GNM-PT}
    \label{alg:GNM-PT}
\begin{algorithmic}
    \STATE {\bfseries Input:} Training set $\mathcal{T}$, pre-trained model
    \STATE {\bfseries Output:} Fine-tuned model
    \STATE Initialize the prompt randomly with parameters $\theta$\;
    
    \FOR[\textcolor{blue}{robust training}]{$t=1$ {\bfseries to} $T_1$}
    \STATE Sample batches data $\mathcal{B}_t\sim \mathcal{T}$\;
    \STATE Compute perturbation $\tilde{\boldsymbol{\varepsilon}}_t \rightarrow \boldsymbol{\varepsilon}_t$ by \cref{eq:gnm_update1}\;
    \STATE Obtain class token $\hat{z}_{cls}$ by Equations~(\ref{eq:vpt}) and (\ref{eq:cls_token})\;
    \STATE Compute GNM gradient: 
    $g_t=\nabla_{\boldsymbol{\theta}} \dfrac{1}{|\mathcal{B}_t|} L_{\mathcal{B}_t}(\hat{z}_{cls}, \boldsymbol{\theta}_t)\mid_{\theta_t+\boldsymbol{\varepsilon}_t}$\;
    \STATE Update fine-tuning parameters: $\boldsymbol{\theta}_{t+1} = \boldsymbol{\theta}_{t} - \alpha_t \cdot g_t$\;
    \ENDFOR
    \FOR[\textcolor{blue}{re-balance classifier}]{$t=T_1+1$ {\bfseries to} $T_2$}   
    \STATE Sample batches data $\mathcal{B}_t\sim \mathcal{T}$\;
    \STATE Compute perturbation $\tilde{\boldsymbol{\varepsilon}}_t \rightarrow \boldsymbol{\varepsilon}_t$ by \cref{eq:gnm_update1}\;
    \STATE Obtain class token $\hat{z}_{cls}$ by Equations~(\ref{eq:vpt}) and (\ref{eq:cls_token})\;
    \STATE Compute re-weight parameter $w_c$ for class $c$ \;  
    \STATE Compute GNM gradient:
    $g_t=\nabla_{\boldsymbol{\theta}} \dfrac{1}{|\mathcal{B}_t|} \sum_{c=1}^C w_c L_{\mathcal{B}_t}^c( \hat{z}_{cls}, \boldsymbol{\theta}_t)\mid_{\boldsymbol{\theta}_t+\boldsymbol{\varepsilon}_t}$ \;  
    \STATE Update fine-tuning parameters.
    \ENDFOR  
\end{algorithmic}
\end{algorithm}

\section{Schematic Comparison of SAM and GNM}

\Cref{fig:gnm_vs_sam} compares the optimization directions of SAM and GNM.
SAM achieves the flattening of the loss landscape by introducing perturbations in the opposite direction of gradient descent. In contrast, GNM accomplishes the flattening of the loss landscape by introducing random perturbations in the parameter neighborhood using the Monte Carlo method.

\begin{figure}[htpb]
\centering
\includegraphics[width=0.5\textwidth]{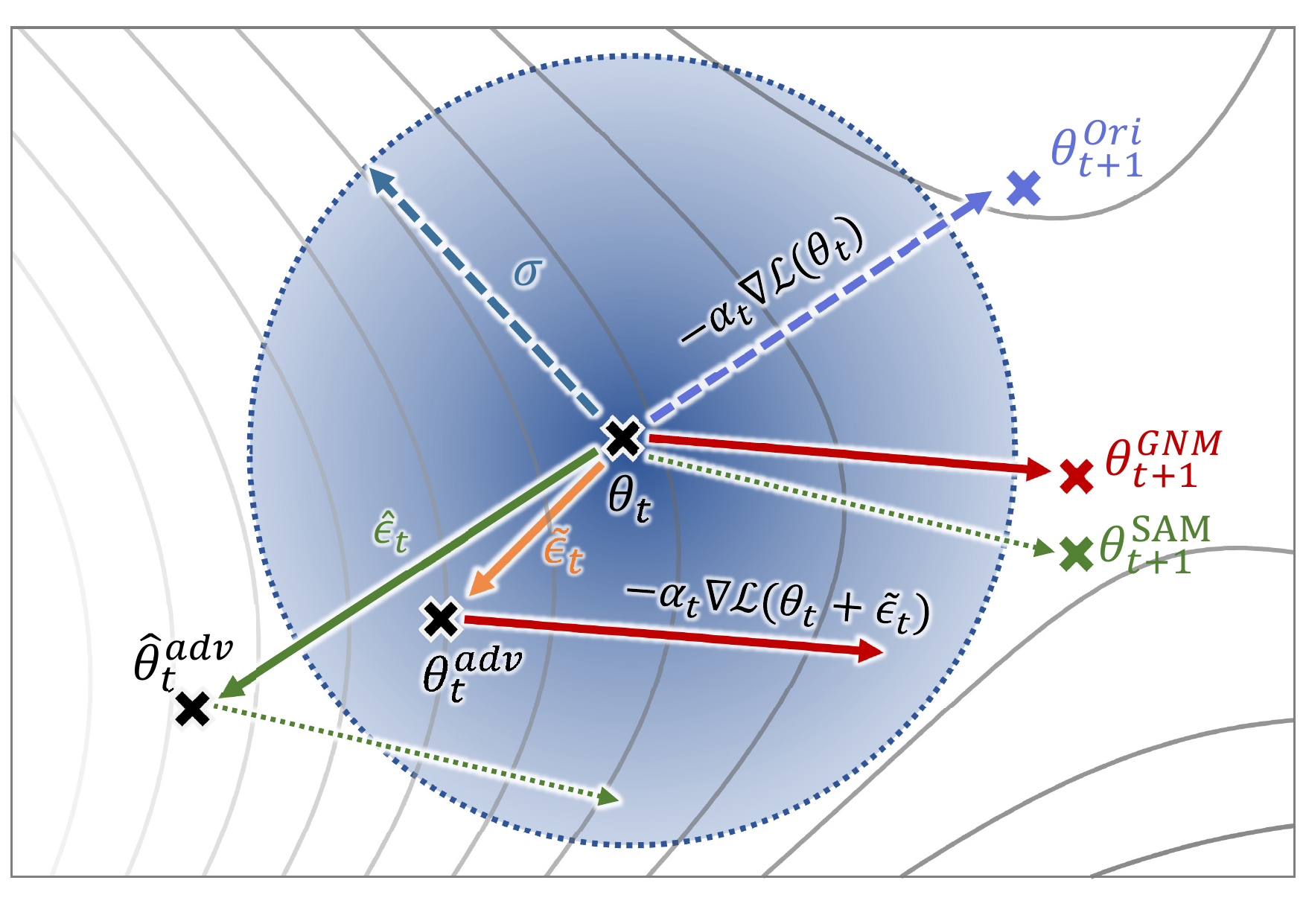} 
\figcaption{Comparison of optimization directions. $\boldsymbol{\theta}^{Ori}_{t+1}$, $\boldsymbol{\theta}^{SAM}_{t+1}$  and $\boldsymbol{\theta}^{GNM}_{t+1}$ represent the original gradient update, gradient update with SAM and with GNM for step $t+1$, respectively.}
\label{fig:gnm_vs_sam}
\end{figure}

\section{Ablation Study for GNM and Merge Prompt}
To validate the effectiveness of each proposed component, we conducted an ablation study using the CIFAR100-LT dataset with an imbalance ratio of 100, employing GCL loss.
For ``$\texttt{Merge}$'' in \cref{eq:cls_token}, we employ addition and assign equal weights to both $w_p$ and $w_z$, setting them to 0.5.
The summarized results are presented in \cref{tab:abla}. 
Incorporating DRW has been observed to enhance overall performance, resulting in a performance improvement of 1.07\%.
Additionally, GNM-PT derives benefits from the design choices made in each individual component. 
The proposed merge prompt and GNM optimization further improve the performance from 89.32\% to 89.58\% and 90.28\%, respectively.
\begin{table}[htpb]
\renewcommand{\thefootnote}{\fnsymbol{footnote}}
 \centering  
 \caption{Effect of each component in the proposed GNM-PT on CIFAR-100-LT with $\text{imbalance ratio} = 100$.} %
 \vspace{2pt}
 \label{tab:abla}
 \setlength{\tabcolsep}{12pt} {
  \small
  \begin{tabular}{ c c c  c }
  \toprule
  DRW & Merge Prompt & GNM & Acc. (\%) \\
  \hline 
  \ding{55}  &  \ding{55} & \ding{55} & 88.25 \\  
  \ding{51}  &  \ding{55} & \ding{55}  & 89.32 \\
  \ding{51}  & \ding{51}  & \ding{55} & 89.58 \\ 
  \ding{51}  & \ding{51}  & \ding{51}  & \textbf{90.28}\\
  \bottomrule
 \end{tabular}
 }
\end{table}

\section{Comparison on ImageNet-LT}
\label{sec:img}

Similar to Place-LT, imageNet-LT is also artificially truncated from its balanced version, namely ImageNet~\cite{Deng_2009_CVPR} and its long-tailed version is created by \citet{LiuZW19LTOW}.
ImageNet-LT comprises 115.8K training images spanning across 1,000 categories, with an imbalance ratio of 256.

We report the accuracy in \cref{tab:img}. 
Considering that the model pre-trained on ImageNet-21K contains information about ImageNet-1K, namely the balanced version of ImageNet-LT, we compare GNM-PT only with the methods using the same pre-trained models.
GNM-PT achieves superior performance, attaining an 80.4\% top-1 classification accuracy, outperforming GML and VPT with a notable margin of 2.4\% and 3.2\%, respectively.
Furthermore, GNM-PT surpasses competing methods across all scale classes, demonstrating its outstanding performance.

\begin{table}[htpb]
 \centering  
 \caption{Comparison results on ImageNet-LT.} \label{tab:img} 
 \setlength{\tabcolsep}{9pt}
 \renewcommand{\arraystretch}{1.1}
  {\small
\begin{tabular}{ l | c c c c}  
  \hlinew{1pt}
  Method &Head &Med &Tail & Overall \\
  \hline
    \multicolumn{5}{c}{Backbone: ViT-B/16}\\ %
  \hline 
    \multicolumn{5}{c}{Supplementary with linguistic data} \\
  \cdashline{1-5}
    VL-LTR~\cite{tian2022vl} & 84.5 & 74.6 & 59.3 & 77.2\footnotemark[3] \\ 
    GML~\cite{SuhS23GML}  & - & - & - & \textbf{78.0}\footnotemark[3]\\ 
  \cdashline{1-5}
    \multicolumn{5}{ c }{Visual-only} \\
  \cdashline{1-5} 
  BALLAD~\cite{ma2021simple}  & 79.1 & 74.5 & 69.8 & 75.7 \\ 
  VPT~\cite{jia2022visual} & 79.5 & 76.5 & 72.8 & \textbf{77.2} \\
  Decoder~\cite{wang2023exploring}  & - & - & - & 73.2 \\ 
  \rowcolor{mygray}
  GNM-PT & 80.6 & 81.1 & 78.2 & \underline{\textbf{80.4}}\\ 
 \hlinew{1pt}
 \end{tabular}
 }
\end{table}
\footnotetext[3]{The results are obtained with the assistance of textual data.} 

\section{Ablation Study for Hyperparameters}
\label{sec:hy-parameter}

In the comparison experiment with SAM, we used the radius ($r_{SAM}$) recommended by the paper in SAM, which is 0.05. 
For GNM, we set the amplitude ($a$) for Gaussian noise based on the radius in SAM, that is, the actual radius ($r_{GNM}$) used in GNM is $\rho_{GNM} = a \times \rho_{SAM}$. 
We use $a=0.1$ in all experiments. 
We conducted the ablation study towards the hyper-parameter $a$ in GNM using CIFAR100 with an imbalance ratio of 100. 
The results are listed in \Cref{tab:abla_radius}. 
When $\alpha \rightarrow 0$, the interference becomes negligible, effectively restricting the loss function to attain its minimum value within a small area. The extreme case is $\alpha = 0$, meaning no additional optimization techniques are employed. Therefore, the smaller $\alpha$ is, the less pronounced its effect.
When $\alpha$ increases: the disturbance area expands. A large $\alpha$ introduces significant perturbations, potentially deviating from the basic gradient descent path. Excessively large values of $\alpha$, for example $\alpha = 2$, lead to performance degradation. In the extreme case of $\alpha = \infty$, the model fails to converge.

\Cref{tab:abla_var} presents ablation studies on the different choices of the variance of the Gaussian distribution. Additionally, we include the results of using a uniform distribution within the range of [-1,1]. The results indicate that the variance impacts model performance. This finding demonstrates that, in addition to the amplitude of $\tilde{\epsilon}$, the distribution also influences model performance and is worthy of further study.

\begin{table}[htpb]
\renewcommand{\thefootnote}{\fnsymbol{footnote}}
 \centering  
 \caption{Ablation study for $a$ on CIFAR-100-LT with $\text{imbalance ratio} = 100$.} %
 \label{tab:abla_radius}
 \resizebox{0.9\linewidth}{!}{
 \renewcommand{\arraystretch}{1.2}
  \begin{tabular}{c | c c c c c c c c}  
  \hlinew{1pt}
  $a$ & 0.01 & 0.05 & 0.1 & 0.2 & 0.5 & 1.0 & 2.0 & SAM ($\rho_{SAM}=0.05$)  \\
  \hline 
    Acc. (\%) & 90.03 & 90.31 & 90.28 & 90.05 & 89.54 & 89.71 & 88.45 & 89.69 \\
  \hlinew{1pt}
 \end{tabular}
 }
\end{table}

\begin{table}[htpb]
\renewcommand{\thefootnote}{\fnsymbol{footnote}}
 \centering  
 \caption{Ablation study for variance on CIFAR-100-LT with $\text{imbalance ratio} = 100$.} %
 \label{tab:abla_var}
 \resizebox{0.9\linewidth}{!}{
 \renewcommand{\arraystretch}{1.2}
  \begin{tabular}{c | c c c c c c c c c}  
  \hlinew{1pt}
  $\rho$ & 3 & 2 & 1 & 0.8 & 0.6 & 0.4 & (1/3) & 0.2 & (uniform distribution)  \\
  \hline 
    Acc. (\%) & 90.03 & 89.99 & 90.14 & 90.23 & 90.01 & 90.06 & (90.28) & 90.12 & (90.17) \\
  \hlinew{1pt}
 \end{tabular}
 }
\end{table}

\section{Experimental Results for Applying GNM on AdapterFormer}
\label{sec:add_res_AF}

AdapterFormer~\cite{chen2022adaptformer} is one of the recently proposed PEFT techniques.
We show the efficacy of GNM when applied to AdapterFormer. 
The results are summarized in \Cref{tab:AF_res}.
In contrast to prompt tuning-based approaches, GNM demonstrates relatively modest improvements on AdapterFormer. 
The reason is that, compared to prompt tuning methods, AdapterFormer has fewer learnable parameters, comprising 1.01M and 0.18M parameters, respectively.

\begin{table}[!h]
 \centering  
 \caption{Results for AdaptFormer with different pre-trained ViT on CIFAR-100-LT with $\text{imbalance ratio} = 100$. IN21K is short for ViT pre-trained with imageNet-21K. CLIP is introduced by Radford et al.~\cite{radford2021learning}.} %
 \label{tab:AF_res}
 \resizebox{1.\linewidth}{!}{
 \renewcommand{\arraystretch}{1.3}
  \begin{tabular}{l| c c c |c c c}  
  \hlinew{1pt}
  Method & IN21K-SGD & IN21K-SAM & IN21K-GNM & CLIP-SGD & CLIP-SAM & CLIP-GNM\\
  \hline 
    Acc. (\%) &89.14 &89.07 & \textbf{89.26} & 81.70 & 81.88 & \textbf{81.96}\\
 \hlinew{1pt}
 \end{tabular}
 }
\end{table}

\section{Comparison with SAM-based Method}
\label{sec:add_res_sam}
\Cref{tab:com_sam-cifar} and \Cref{tab:com_sam-pla} compare the SAM-based method for long-tailed data, that is, CC-SAM~\cite{zhou2023class}.
GNM consistently enhances generalization performance across each class compared to CC-SAM.

Overall, GNM can serve as an optimization method that not only enhances the performance of the VPT-based model but also improves the performance of CNNs trained from scratch or with full fine-tuning.
\begin{minipage}[htpb]{\textwidth}
\begin{minipage}[t]{0.5\textwidth}
 \centering  
 \vspace{9pt}
 \tabcaption{Comparison results on CIFAR-100-LT. The backbone is ResNet32.} %
 \label{tab:com_sam-cifar}
 \vspace{9pt}
 \setlength{\tabcolsep}{6pt} {
  \begin{tabular}{l |c c c }  
  \hlinew{1pt}
  Imbalance ratio  &  200 &	100	& 50  \\
  \hline
  CC-SAM & 45.66 & 50.83	& 53.91 \\
  \rowcolor{mygray}
  GNM	& 46.33	& 51.13	& 54.50    \\
  \hlinew{1pt}
 \end{tabular}
 }
\end{minipage}
\hspace{6pt}
\begin{minipage}[t]{0.5\textwidth}
\vspace{9pt}
 \tabcaption{Comparison results on Places-LT. The backbone is ResNet152.} %
 \label{tab:com_sam-pla}
 \vspace{9pt}
 \setlength{\tabcolsep}{6pt} {
  \begin{tabular}{l |c c c c }  
  \hlinew{1pt}
  Method  &  Head &	Med	& Tail &All \\
  \hline
  CC-SAM	&  43.69 & 41.95 &	31.95 &	40.46\\
  \rowcolor{mygray}
  GNM	& 43.92	& 42.13	& 32.74	& 40.79 \\
  \hlinew{1pt}
 \end{tabular}
 }
\end{minipage}
\end{minipage}

\section{Comparison Results w.r.t. Optimization Strategies Under the Same Backbone}
\label{supp:opt_com}

Tables~\ref{tab:rbu_wost2} and \ref{tab:rbu_wst2} present the comparison of existing optimization strategies and our GNM under the same training paradigm. 
We implement the experiments using CIFAR-100-LT with an imbalance ratio of 100 and take fine-tuning pre-trained ViT-B/16 by VPT with GCL loss as a base training paradigm, named Base. 
The re-balancing strategy employed in the second stage can influence the performance of optimization methods on a per-class level. 
We conducted a comparative analysis for various optimization methods, both without and with the application of the rebalancing strategy and exhibit the result in Tables~\ref{tab:rbu_wost2} and \ref{tab:rbu_wst2}, respectively. 
DRW is utilized as the re-balancing strategy. 
LPT also employs two stages that include a re-balance strategy, therefore we present it in \cref{tab:rbu_wst2}.

As observed from \Cref{tab:rbu_wost2}, without the re-balancing strategy to adjust the classifier bias, imbSAM achieves better overall accuracy. 
However, ImbSAM has little impact on the head and middle classes. 
Additionally, both CCSAM and ImbSAM require two back propagations, thereby doubling the computation time. 
Compared to VPT with GCL, which does not include additional optimization, GNM incurs only a small computational overhead.

\Cref{tab:rbu_wst2} shows that the re-balancing strategy sacrifices a small amount of head class performance in exchange for significantly improving tail class performance. 
imbSAM essentially does not employ additional optimization for head classes, whereas CCSAM and GNM use additional optimization for all classes. 
However, compared to imbSAM, CCSAM and GNM result in a greater reduction in the accuracy of head classes. 
The optimization strategies may have a limited impact on the rebalancing training process. We will investigate this in detail in future work.

Besides, we provide a detailed analysis of why GNM-PT cannot outperform imbSAM in the first stage but outperform in the second stage as shown in \cref{tab:rbu_wost2} and \cref{tab:rbu_wst2}. 
Stage 1 of ImbSAM already applies strong regularization to tail classes, the rebalancing strategy in stage 2 essentially duplicates this effect. 
In contrast, GNM applies the same level of regularization to all classes without specifically intensifying it for tail classes and thus the strong regularization for tail classes can still work in stage 2.

\begin{minipage}[htpb]{\textwidth}
\begin{minipage}[t]{0.5\textwidth}
 \centering  
 \tabcaption{Optimization strategy comparison. 
 Models are trained with stage 1 only. 
 NET represents native execution time (s).} %
\vspace{10pt}
 \label{tab:rbu_wost2}
 \setlength{\tabcolsep}{3pt}
 \resizebox{1\linewidth}{!}
 {
    \begin{tabular}{l |c c c c c }  
    \hlinew{1pt}
    Method  &  Head &	Med	& Tail &All & NET \\
    \hline
    Base & 92.86 & 88.94 & 79.28 & 87.76 & \textbf{40.32} \\
    Base + CCSAM & 92.81 & 88.31 & 79.31 & 87.84 & 80.47 \\
    Base + ImbSAM & 92.92 & 88.43 & \textbf{84.00} & \textbf{89.02} & 88.97 \\
    \rowcolor{mygray}
    Base + GNM-PT & \textbf{93.67} & \textbf{89.03} & 81.10 & 88.46 & 42.77 \\
    \hlinew{1pt}
    \end{tabular}
}
\end{minipage}
\hspace{6pt}
\begin{minipage}[t]{0.5\textwidth}
  \centering  
 \tabcaption{Optimization strategy comparison (with stage 2). 
 The listed decreases in accuracy (\%) of head classes are compared to that in \cref{tab:rbu_wost2}. }%
 \vspace{9pt}
 \label{tab:rbu_wst2}
 \setlength{\tabcolsep}{3pt}
 \resizebox{1\linewidth}{!}
 {
    \begin{tabular}{l |c c c c c }  
    \hlinew{1pt}
    Method  &  Head &	Med	& Tail &All \\
    \hline
    LPT & - & - & - & 89.10 \\
    Base & 90.08 ($\downarrow$ 2.78) & 89.60 & 88.14 & 89.40 \\
    Base + CCSAM & 90.47 ($\downarrow$ 2.34) & 89.63 & 88.03 & 89.54 \\
    Base + ImbSAM & 91.75 ($\downarrow$ 1.17) & 88.71 & 87.90 & 89.62 \\
    \rowcolor{mygray}
    Base + GNM-PT & \textbf{91.94} & \textbf{90.17} & \textbf{88.21} & \textbf{90.28} \\
    \hlinew{1pt}
    \end{tabular}
}
\end{minipage}
\end{minipage}

\section{Experimental Results for Applying GNM with balanced softmax}
\label{sec:add_res_loss}

\Cref{tab:com_basm} presents the results of applying GNM with balanced softmax (BASM)~\cite{RenJW2020Balanced}.
The backbone is ResNet-152 pretrained on imageNet-1k.
GNM exhibits improvements across all classes, especially tail classes. Consequently, the overall performance is enhanced in comparison to SAM.

\Cref{fig:rebu_pl} presents the visualization results for the places-LT trained with Resnet-152.
Although, the performance differences are small as shown in \cref{tab:com_basm}, it can still be observed that SGD and SAM exhibit steeper gradients and some irregular protrusions.

\begin{table}[!h]
 \centering  
 \caption{Comparison results on Places-LT with balanced softmax (BASM).} %
 \label{tab:com_basm}
 \setlength{\tabcolsep}{5pt} {
  \begin{tabular}{l |c c c c }  
  \hlinew{1pt}
  Method  &  Head &	Med	& Tail &All \\
  \hline
  BASM-SGD & 43.71 & 42.66 & 27.05 & 39.75 \\
  BASM-SAM & 43.79 & 42.85 & 27.31 & 39.91  \\
  \rowcolor{mygray}
  BASM-GNM & 43.93 & 41.96 & 30.57 & 40.27 \\
  \hlinew{1pt}
 \end{tabular}
 }
\end{table}

\begin{figure}[htpb]
    \subfigure[Side view]{
    \includegraphics[width=0.5\textwidth]{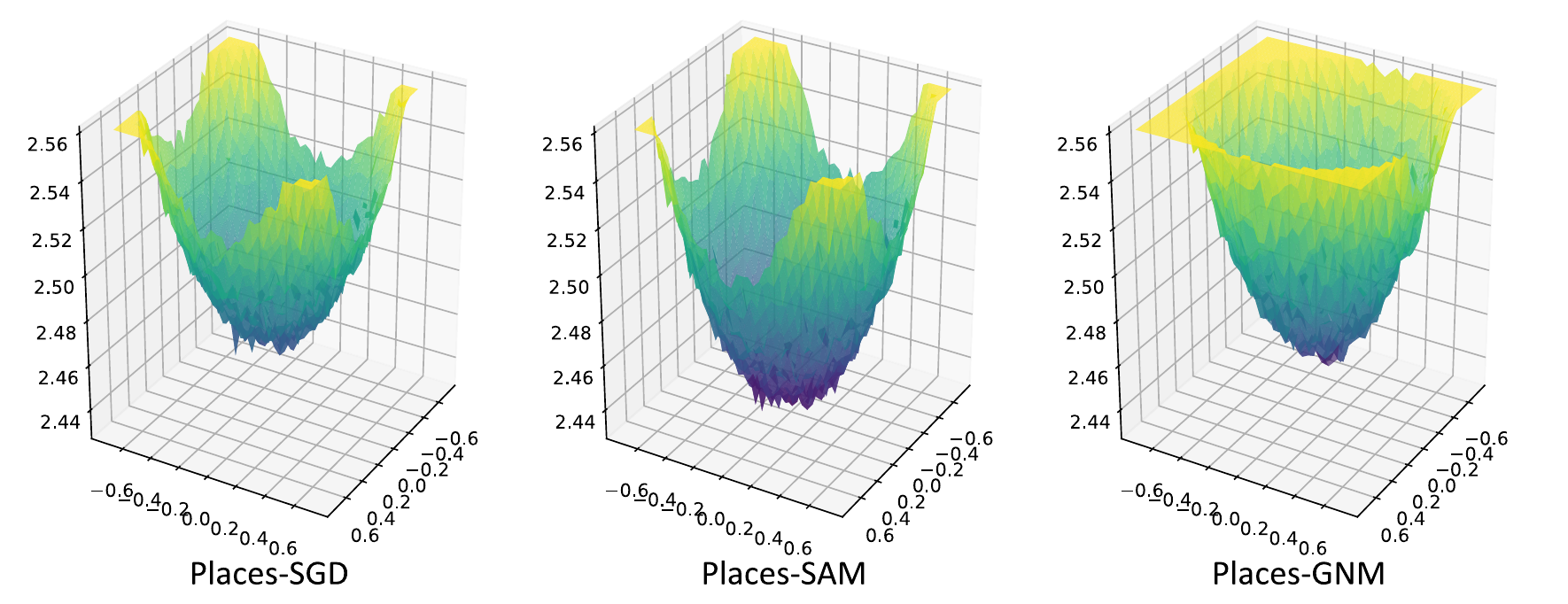}
    \label{fig:rebu_pl_a}
    }
    \subfigure[Top view]{
    \includegraphics[width=0.5\textwidth]{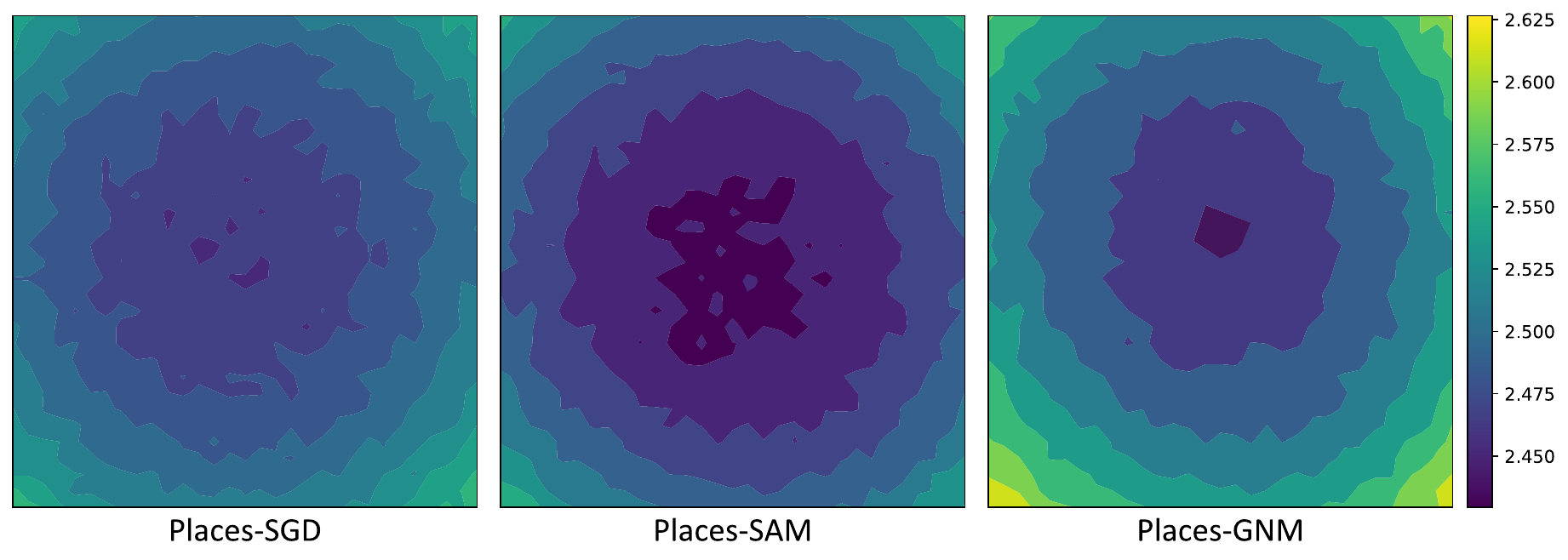}
    \label{fig:rebu_pl_b}
    }
    \caption{Loss landscape comparison of ResNet-152 with BASM~\cite{RenJW2020Balanced} (best view in color). 
    The dataset used is Places-LT.}
    \label{fig:rebu_pl}
\end{figure}


\clearpage
\section*{NeurIPS Paper Checklist}

\begin{enumerate}

\item {\bf Claims}
    \item[] Question: Do the main claims made in the abstract and introduction accurately reflect the paper's contributions and scope?
    \item[] Answer: \answerYes{} 
    \item[] Justification: The Abstract and \Cref{sec:intro}. 
    \item[] Guidelines:
    \begin{itemize}
        \item The answer NA means that the abstract and introduction do not include the claims made in the paper.
        \item The abstract and/or introduction should clearly state the claims made, including the contributions made in the paper and important assumptions and limitations. A No or NA answer to this question will not be perceived well by the reviewers. 
        \item The claims made should match theoretical and experimental results, and reflect how much the results can be expected to generalize to other settings. 
        \item It is fine to include aspirational goals as motivation as long as it is clear that these goals are not attained by the paper. 
    \end{itemize}

\item {\bf Limitations}
    \item[] Question: Does the paper discuss the limitations of the work performed by the authors?
    \item[] Answer: \answerYes{} 
    \item[] Justification: \Cref{sec:con}. 
    \item[] Guidelines:
    \begin{itemize}
        \item The answer NA means that the paper has no limitation while the answer No means that the paper has limitations, but those are not discussed in the paper. 
        \item The authors are encouraged to create a separate "Limitations" section in their paper.
        \item The paper should point out any strong assumptions and how robust the results are to violations of these assumptions (e.g., independence assumptions, noiseless settings, model well-specification, asymptotic approximations only holding locally). The authors should reflect on how these assumptions might be violated in practice and what the implications would be.
        \item The authors should reflect on the scope of the claims made, e.g., if the approach was only tested on a few datasets or with a few runs. In general, empirical results often depend on implicit assumptions, which should be articulated.
        \item The authors should reflect on the factors that influence the performance of the approach. For example, a facial recognition algorithm may perform poorly when image resolution is low or images are taken in low lighting. Or a speech-to-text system might not be used reliably to provide closed captions for online lectures because it fails to handle technical jargon.
        \item The authors should discuss the computational efficiency of the proposed algorithms and how they scale with dataset size.
        \item If applicable, the authors should discuss possible limitations of their approach to address problems of privacy and fairness.
        \item While the authors might fear that complete honesty about limitations might be used by reviewers as grounds for rejection, a worse outcome might be that reviewers discover limitations that aren't acknowledged in the paper. The authors should use their best judgment and recognize that individual actions in favor of transparency play an important role in developing norms that preserve the integrity of the community. Reviewers will be specifically instructed to not penalize honesty concerning limitations.
    \end{itemize}

\item {\bf Theory Assumptions and Proofs}
    \item[] Question: For each theoretical result, does the paper provide the full set of assumptions and a complete (and correct) proof?
    \item[] Answer: \answerYes{} 
    \item[] Justification: \Cref{sec:GNM}, \Cref{sec:theo_GNM} and \Cref{sec:remark1}. 
    \item[] Guidelines:
    \begin{itemize}
        \item The answer NA means that the paper does not include theoretical results. 
        \item All the theorems, formulas, and proofs in the paper should be numbered and cross-referenced.
        \item All assumptions should be clearly stated or referenced in the statement of any theorems.
        \item The proofs can either appear in the main paper or the supplemental material, but if they appear in the supplemental material, the authors are encouraged to provide a short proof sketch to provide intuition. 
        \item Inversely, any informal proof provided in the core of the paper should be complemented by formal proofs provided in appendix or supplemental material.
        \item Theorems and Lemmas that the proof relies upon should be properly referenced. 
    \end{itemize}

    \item {\bf Experimental Result Reproducibility}
    \item[] Question: Does the paper fully disclose all the information needed to reproduce the main experimental results of the paper to the extent that it affects the main claims and/or conclusions of the paper (regardless of whether the code and data are provided or not)?
    \item[] Answer: \answerYes{} 
    \item[] Justification: The source code is provided in the Abstract. 
    \item[] Guidelines:
    \begin{itemize}
        \item The answer NA means that the paper does not include experiments.
        \item If the paper includes experiments, a No answer to this question will not be perceived well by the reviewers: Making the paper reproducible is important, regardless of whether the code and data are provided or not.
        \item If the contribution is a dataset and/or model, the authors should describe the steps taken to make their results reproducible or verifiable. 
        \item Depending on the contribution, reproducibility can be accomplished in various ways. For example, if the contribution is a novel architecture, describing the architecture fully might suffice, or if the contribution is a specific model and empirical evaluation, it may be necessary to either make it possible for others to replicate the model with the same dataset, or provide access to the model. In general. releasing code and data is often one good way to accomplish this, but reproducibility can also be provided via detailed instructions for how to replicate the results, access to a hosted model (e.g., in the case of a large language model), releasing of a model checkpoint, or other means that are appropriate to the research performed.
        \item While NeurIPS does not require releasing code, the conference does require all submissions to provide some reasonable avenue for reproducibility, which may depend on the nature of the contribution. For example
        \begin{enumerate}
            \item If the contribution is primarily a new algorithm, the paper should make it clear how to reproduce that algorithm.
            \item If the contribution is primarily a new model architecture, the paper should describe the architecture clearly and fully.
            \item If the contribution is a new model (e.g., a large language model), then there should either be a way to access this model for reproducing the results or a way to reproduce the model (e.g., with an open-source dataset or instructions for how to construct the dataset).
            \item We recognize that reproducibility may be tricky in some cases, in which case authors are welcome to describe the particular way they provide for reproducibility. In the case of closed-source models, it may be that access to the model is limited in some way (e.g., to registered users), but it should be possible for other researchers to have some path to reproducing or verifying the results.
        \end{enumerate}
    \end{itemize}

\item {\bf Open access to data and code}
    \item[] Question: Does the paper provide open access to the data and code, with sufficient instructions to faithfully reproduce the main experimental results, as described in supplemental material?
    \item[] Answer: \answerYes{} 
    \item[] Justification: The datasets used in this work are publicly available, and the source code is provided in the abstract.  
    \item[] Guidelines:
    \begin{itemize}
        \item The answer NA means that paper does not include experiments requiring code.
        \item Please see the NeurIPS code and data submission guidelines (\url{https://nips.cc/public/guides/CodeSubmissionPolicy}) for more details.
        \item While we encourage the release of code and data, we understand that this might not be possible, so “No” is an acceptable answer. Papers cannot be rejected simply for not including code, unless this is central to the contribution (e.g., for a new open-source benchmark).
        \item The instructions should contain the exact command and environment needed to run to reproduce the results. See the NeurIPS code and data submission guidelines (\url{https://nips.cc/public/guides/CodeSubmissionPolicy}) for more details.
        \item The authors should provide instructions on data access and preparation, including how to access the raw data, preprocessed data, intermediate data, and generated data, etc.
        \item The authors should provide scripts to reproduce all experimental results for the new proposed method and baselines. If only a subset of experiments are reproducible, they should state which ones are omitted from the script and why.
        \item At submission time, to preserve anonymity, the authors should release anonymized versions (if applicable).
        \item Providing as much information as possible in supplemental material (appended to the paper) is recommended, but including URLs to data and code is permitted.
    \end{itemize}

\item {\bf Experimental Setting/Details}
    \item[] Question: Does the paper specify all the training and test details (e.g., data splits, hyperparameters, how they were chosen, type of optimizer, etc.) necessary to understand the results?
    \item[] Answer: \answerYes{} 
    \item[] Justification: \Cref{sec:dataset} and \Cref{sec:hy-parameter}. 
    \item[] Guidelines:
    \begin{itemize}
        \item The answer NA means that the paper does not include experiments.
        \item The experimental setting should be presented in the core of the paper to a level of detail that is necessary to appreciate the results and make sense of them.
        \item The full details can be provided either with the code, in appendix, or as supplemental material.
    \end{itemize}

\item {\bf Experiment Statistical Significance}
    \item[] Question: Does the paper report error bars suitably and correctly defined or other appropriate information about the statistical significance of the experiments?
    \item[] Answer: \answerYes{} 
    \item[] Justification: \Cref{sec:com_res} and \Cref{sec:further_ana}. 
    \item[] Guidelines:
    \begin{itemize}
        \item The answer NA means that the paper does not include experiments.
        \item The authors should answer "Yes" if the results are accompanied by error bars, confidence intervals, or statistical significance tests, at least for the experiments that support the main claims of the paper.
        \item The factors of variability that the error bars are capturing should be clearly stated (for example, train/test split, initialization, random drawing of some parameter, or overall run with given experimental conditions).
        \item The method for calculating the error bars should be explained (closed form formula, call to a library function, bootstrap, etc.)
        \item The assumptions made should be given (e.g., Normally distributed errors).
        \item It should be clear whether the error bar is the standard deviation or the standard error of the mean.
        \item It is OK to report 1-sigma error bars, but one should state it. The authors should preferably report a 2-sigma error bar than state that they have a 96\% CI, if the hypothesis of Normality of errors is not verified.
        \item For asymmetric distributions, the authors should be careful not to show in tables or figures symmetric error bars that would yield results that are out of range (e.g. negative error rates).
        \item If error bars are reported in tables or plots, The authors should explain in the text how they were calculated and reference the corresponding figures or tables in the text.
    \end{itemize}

\item {\bf Experiments Compute Resources}
    \item[] Question: For each experiment, does the paper provide sufficient information on the computer resources (type of compute workers, memory, time of execution) needed to reproduce the experiments?
    \item[] Answer:\answerYes{} 
    \item[] Justification: \Cref{sec:further_ana}. 
    \item[] Guidelines:
    \begin{itemize}
        \item The answer NA means that the paper does not include experiments.
        \item The paper should indicate the type of compute workers CPU or GPU, internal cluster, or cloud provider, including relevant memory and storage.
        \item The paper should provide the amount of compute required for each of the individual experimental runs as well as estimate the total compute. 
        \item The paper should disclose whether the full research project required more compute than the experiments reported in the paper (e.g., preliminary or failed experiments that didn't make it into the paper). 
    \end{itemize}
    
\item {\bf Code Of Ethics}
    \item[] Question: Does the research conducted in the paper conform, in every respect, with the NeurIPS Code of Ethics \url{https://neurips.cc/public/EthicsGuidelines}?
    \item[] Answer:  \answerYes{} 
    \item[] Justification: The research in the paper conforms with the NeurIPS Code of Ethics.
    \item[] Guidelines: 
    \begin{itemize}
        \item The answer NA means that the authors have not reviewed the NeurIPS Code of Ethics.
        \item If the authors answer No, they should explain the special circumstances that require a deviation from the Code of Ethics.
        \item The authors should make sure to preserve anonymity (e.g., if there is a special consideration due to laws or regulations in their jurisdiction).
    \end{itemize}

\item {\bf Broader Impacts}
    \item[] Question: Does the paper discuss both potential positive societal impacts and negative societal impacts of the work performed?
    \item[] Answer: \answerNA{} 
    \item[] Justification: 
    This paper presents work whose goal is to advance the field of Machine Learning. 
    It constitutes foundational research and is not tied to particular applications, let alone deployments.
    \item[] Guidelines:
    \begin{itemize}
        \item The answer NA means that there is no societal impact of the work performed.
        \item If the authors answer NA or No, they should explain why their work has no societal impact or why the paper does not address societal impact.
        \item Examples of negative societal impacts include potential malicious or unintended uses (e.g., disinformation, generating fake profiles, surveillance), fairness considerations (e.g., deployment of technologies that could make decisions that unfairly impact specific groups), privacy considerations, and security considerations.
        \item The conference expects that many papers will be foundational research and not tied to particular applications, let alone deployments. However, if there is a direct path to any negative applications, the authors should point it out. For example, it is legitimate to point out that an improvement in the quality of generative models could be used to generate deepfakes for disinformation. On the other hand, it is not needed to point out that a generic algorithm for optimizing neural networks could enable people to train models that generate Deepfakes faster.
        \item The authors should consider possible harms that could arise when the technology is being used as intended and functioning correctly, harms that could arise when the technology is being used as intended but gives incorrect results, and harms following from (intentional or unintentional) misuse of the technology.
        \item If there are negative societal impacts, the authors could also discuss possible mitigation strategies (e.g., gated release of models, providing defenses in addition to attacks, mechanisms for monitoring misuse, mechanisms to monitor how a system learns from feedback over time, improving the efficiency and accessibility of ML).
    \end{itemize}
    
\item {\bf Safeguards}
    \item[] Question: Does the paper describe safeguards that have been put in place for responsible release of data or models that have a high risk for misuse (e.g., pretrained language models, image generators, or scraped datasets)?
    \item[] Answer: \answerNA{} 
    \item[] Justification: This paper poses no such risks. 
    \item[] Guidelines:
    \begin{itemize}
        \item The answer NA means that the paper poses no such risks.
        \item Released models that have a high risk for misuse or dual-use should be released with necessary safeguards to allow for controlled use of the model, for example by requiring that users adhere to usage guidelines or restrictions to access the model or implementing safety filters. 
        \item Datasets that have been scraped from the Internet could pose safety risks. The authors should describe how they avoided releasing unsafe images.
        \item We recognize that providing effective safeguards is challenging, and many papers do not require this, but we encourage authors to take this into account and make a best faith effort.
    \end{itemize}

\item {\bf Licenses for existing assets}
    \item[] Question: Are the creators or original owners of assets (e.g., code, data, models), used in the paper, properly credited and are the license and terms of use explicitly mentioned and properly respected?
    \item[] Answer: \answerYes{} 
    \item[] Justification: The benchmark datasets used in this paper are publicly available. The original paper that produced the code package and the dataset are properly cited.  
    \item[] Guidelines:
    \begin{itemize}
        \item The answer NA means that the paper does not use existing assets.
        \item The authors should cite the original paper that produced the code package or dataset.
        \item The authors should state which version of the asset is used and, if possible, include a URL.
        \item The name of the license (e.g., CC-BY 4.0) should be included for each asset.
        \item For scraped data from a particular source (e.g., website), the copyright and terms of service of that source should be provided.
        \item If assets are released, the license, copyright information, and terms of use in the package should be provided. For popular datasets, \url{paperswithcode.com/datasets} has curated licenses for some datasets. Their licensing guide can help determine the license of a dataset.
        \item For existing datasets that are re-packaged, both the original license and the license of the derived asset (if it has changed) should be provided.
        \item If this information is not available online, the authors are encouraged to reach out to the asset's creators.
    \end{itemize}

\item {\bf New Assets}
    \item[] Question: Are new assets introduced in the paper well documented and is the documentation provided alongside the assets?
    \item[] Answer: \answerYes{} 
    \item[] Justification: The source code is temporarily available at the anonymized URL provided in the abstract. 
    \item[] Guidelines:
    \begin{itemize}
        \item The answer NA means that the paper does not release new assets.
        \item Researchers should communicate the details of the dataset/code/model as part of their submissions via structured templates. This includes details about training, license, limitations, etc. 
        \item The paper should discuss whether and how consent was obtained from people whose asset is used.
        \item At submission time, remember to anonymize your assets (if applicable). You can either create an anonymized URL or include an anonymized zip file.
    \end{itemize}

\item {\bf Crowdsourcing and Research with Human Subjects}
    \item[] Question: For crowdsourcing experiments and research with human subjects, does the paper include the full text of instructions given to participants and screenshots, if applicable, as well as details about compensation (if any)? 
    \item[] Answer: \answerNA{} 
    \item[] Justification: The paper does not involve crowdsourcing nor research with human subjects. 
    \item[] Guidelines:
    \begin{itemize}
        \item The answer NA means that the paper does not involve crowdsourcing nor research with human subjects.
        \item Including this information in the supplemental material is fine, but if the main contribution of the paper involves human subjects, then as much detail as possible should be included in the main paper. 
        \item According to the NeurIPS Code of Ethics, workers involved in data collection, curation, or other labor should be paid at least the minimum wage in the country of the data collector. 
    \end{itemize}

\item {\bf Institutional Review Board (IRB) Approvals or Equivalent for Research with Human Subjects}
    \item[] Question: Does the paper describe potential risks incurred by study participants, whether such risks were disclosed to the subjects, and whether Institutional Review Board (IRB) approvals (or an equivalent approval/review based on the requirements of your country or institution) were obtained?
    \item[] Answer: \answerNA{} 
    \item[] Justification: The paper does not involve crowdsourcing nor research with human subjects. 
    \item[] Guidelines:
    \begin{itemize}
        \item The answer NA means that the paper does not involve crowdsourcing nor research with human subjects.
        \item Depending on the country in which research is conducted, IRB approval (or equivalent) may be required for any human subjects research. If you obtained IRB approval, you should clearly state this in the paper. 
        \item We recognize that the procedures for this may vary significantly between institutions and locations, and we expect authors to adhere to the NeurIPS Code of Ethics and the guidelines for their institution. 
        \item For initial submissions, do not include any information that would break anonymity (if applicable), such as the institution conducting the review.
    \end{itemize}

\end{enumerate}

\end{document}